\documentclass[12pt]{article}
\usepackage[margin=1in]{geometry}
\usepackage[utf8]{inputenc}

\usepackage[english]{babel}
\usepackage[round]{natbib}
\usepackage[font=footnotesize]{caption}
\usepackage{mathtools,amssymb,amsthm}

\usepackage{subcaption}
\usepackage{pifont}
\usepackage{booktabs}

\usepackage{algorithm}
\usepackage{algorithmic}
\usepackage{wrapfig}
\usepackage{float}
\usepackage[toc,page]{appendix}

\usepackage[dvipsnames]{xcolor}
\definecolor{citeCol}{rgb}{0.5412, 0.2118, 0.0588}
\usepackage[colorlinks=true, citecolor=citeCol, linkcolor=citeCol, urlcolor=citeCol]{hyperref}

\usepackage[capitalize]{cleveref}
\usepackage{bm}
\usepackage{enumitem}

\usepackage{graphicx}
\usepackage{graphicx,amsfonts,mathrsfs}
\usepackage{amsthm,amsmath,amsfonts,amssymb,enumitem}
\usepackage{mathtools}
\theoremstyle{plain}

\theoremstyle{plain}

\usepackage{xcolor}
\usepackage{bm}
\usepackage{todonotes}

\newtheorem{lem}{Lemma}

\newtheorem{thm}{Theorem}

\theoremstyle{remark}

\newcommand{\R}{\mathbb{R}}

\newcommand{\cP}{\mathcal{P}}

\newcommand{\cN}{\mathcal{N}}

\newcommand{\cL}{\mathcal{L}}

\newcommand{\bes}{\begin{equation*}}
\newcommand{\ees}{\end{equation*}}
\newcommand{\beas}{\begin{eqnarray*}}
\newcommand{\eeas}{\end{eqnarray*}}
\newcommand{\bea}{\begin{eqnarray}}
\newcommand{\eea}{\end{eqnarray}}
\newcommand{\be}{\begin{equation}}
\newcommand{\ee}{\end{equation}}
\newcommand{\bei}{\begin{itemize}}
\newcommand{\eei}{\end{itemize}}
\newcommand{\bec}{\begin{cases}}
\newcommand{\eec}{\end{cases}}
\newcommand{\ben}{\begin{enumerate}}
\newcommand{\een}{\end{enumerate}}

\newcommand{\bbE}{\mathbb{E}}

\newcommand{\bbl}{\begin{block}}
\newcommand{\ebl}{\end{block}}

\newcommand{\dd}{\,\mathrm{d}}

\newcommand{\rme}{\mathrm{e}}

\newcommand*{\defeq}{\coloneqq}

\newcommand{\msf}[1]{\mathsf{#1}}

\newcommand{\eps}{\varepsilon}

 \newcommand{\bb}{\mathbb}

 \newcommand{\rset}{\mathbb{R}}

 \def\eqsp{\;}
 \def\rset{\mathbb{R}}

\def\eqsp{\;}

\newcommand{\ccint}[1]{\left[#1\right]}

\newcommand{\Idd}{\operatorname{I}_d}
\newcommand{\gauss}{\mathbf{N}}
\newcommand{\rmd}{\mathrm{d}}

\newcommand{\PE}{\mathbb{E}}
\newcommand{\1}{\bm{1}}

\usepackage{pifont}

\definecolor{blue}{rgb}{0,.1, .6}
\definecolor{royalblue}{rgb}{0.2549, 0.4118, 0.8824}

\definecolor{stdgray}{gray}{0.48}
\providecommand{\wentry}[2]{#1\,{\color{stdgray}$\pm$\,#2}}

\theoremstyle{plain}
\newtheorem{theorem}{Theorem}[section]

\newtheorem{lemma}[theorem]{Lemma}

\theoremstyle{definition}

\theoremstyle{remark}

\title{Trajectory inference via Acceleration Matching}

\author{
\begin{tabular}{cc}
Bartolo Dazzini\thanks{Department of Mathematics, University of Padova. \texttt{bartolo.dazzini@studenti.unipd.it}}
&
Giovanni Conforti\thanks{Department of Mathematics, University of Padova. \texttt{giovanni.conforti@unipd.it}}
\\[0.6em]
Alain Durmus\thanks{CMAP, \'Ecole polytechnique. \texttt{alain.durmus@polytechnique.edu}}
&
Aram-Alexandre Pooladian
\thanks{Institute for Foundations of Data Science, Yale University. \texttt{aram-alexandre.pooladian@yale.edu}}
\thanks{Corresponding author}
\end{tabular}
}

\date{\today}

\begin{document}
\maketitle
\abstract{
Trajectory inference is a fundamental problem in many scientific domains: given a collection of unpaired snapshots of observations at discrete time points, the goal is to generate smooth trajectories that best resemble and interpolate the data.
Existing algorithms exhibit computational challenges:
they either rely on preprocessing subroutines to enforce smoothness or on simulation-based training objectives, both of which can be expensive.
In order to overcome these limitations, we propose a new algorithm called Acceleration Matching (\texttt{AM}). Our approach consists of lifting the original interpolation problem to phase space and then regressing onto an explicit conditional acceleration field that induces random, smooth trajectories that agree with the prescribed marginals. Importantly, our resulting training algorithm only requires positional data, avoids trajectory simulation during training, and is devoid of expensive preprocessing. We provide ample numerical evidence suggesting that \texttt{AM} is competitive with or superior to existing algorithms on several benchmark problems from the existing literature.
}

\section{Introduction}
We investigate the trajectory inference problem: given snapshots of unpaired observations collected at multiple time points
$
0=t_0<t_1<\cdots<t_J=1$, the goal is to generate trajectories that fit the given marginal data. In probabilistic terms, we want to find a measure-valued curve $t \in \ccint{0,1} \mapsto \mu_t \in {\cal P}(\R^d)$  such that
\begin{align}\label{eq:marg_constraints}
     \mu_{t_j} = \rho_{t_j} \quad \forall j \in \{0,1,\ldots,J\}\,,
\end{align}
for the given (empirical) probability distributions $\{\rho_{t_j}\}_{j=0}^J$.

This problem appears in a variety of data-driven research domains, including astronomy, economics, and single-cell RNA sequencing, among others \citep{franzke2015stochastic,kazakevivcius2021understanding,yeo2021generative,bunne2022proximal,kubica2007efficient,liounis2020autonomous}. In each of these cases, data are obtained over time in the form of unpaired observations. For example, the destructive nature of single-cell RNA sequencing makes it impossible to have repeated measurements of the same cell over time as the population (of cells) continues to evolve. As a result, the nature of the data renders standard procedures (based on, for example, time-series modeling) unhelpful. These unique challenges have led to several mathematical and algorithmic developments for the problem; see~\citet{lavenant2024toward, hong2025trajectory, bunne2023learning, schiebinger2019optimal}, among many others.

A naive approach is to simply construct $(\mu_t)_{t \in [0,1]}$ in a piecewise manner, learning $J$ independent segments of the probability path of interest. This can be accomplished by separately learning transport maps between each $\rho_{t_j}$ and $\rho_{t_{j+1}}$ and then stitching the paths together. For instance, one can learn optimal transport maps \citep{manole2021plugin,pooladian2021entropic,deb2021rates} or instead pass through the popular flow matching (FM) framework \citep{lipman2022flow,albergo2022building,liu2022flow}.
\begin{figure}[t]
    \centering
    \begin{subfigure}[t]{0.48\linewidth}
        \centering
        \includegraphics[width=\linewidth]{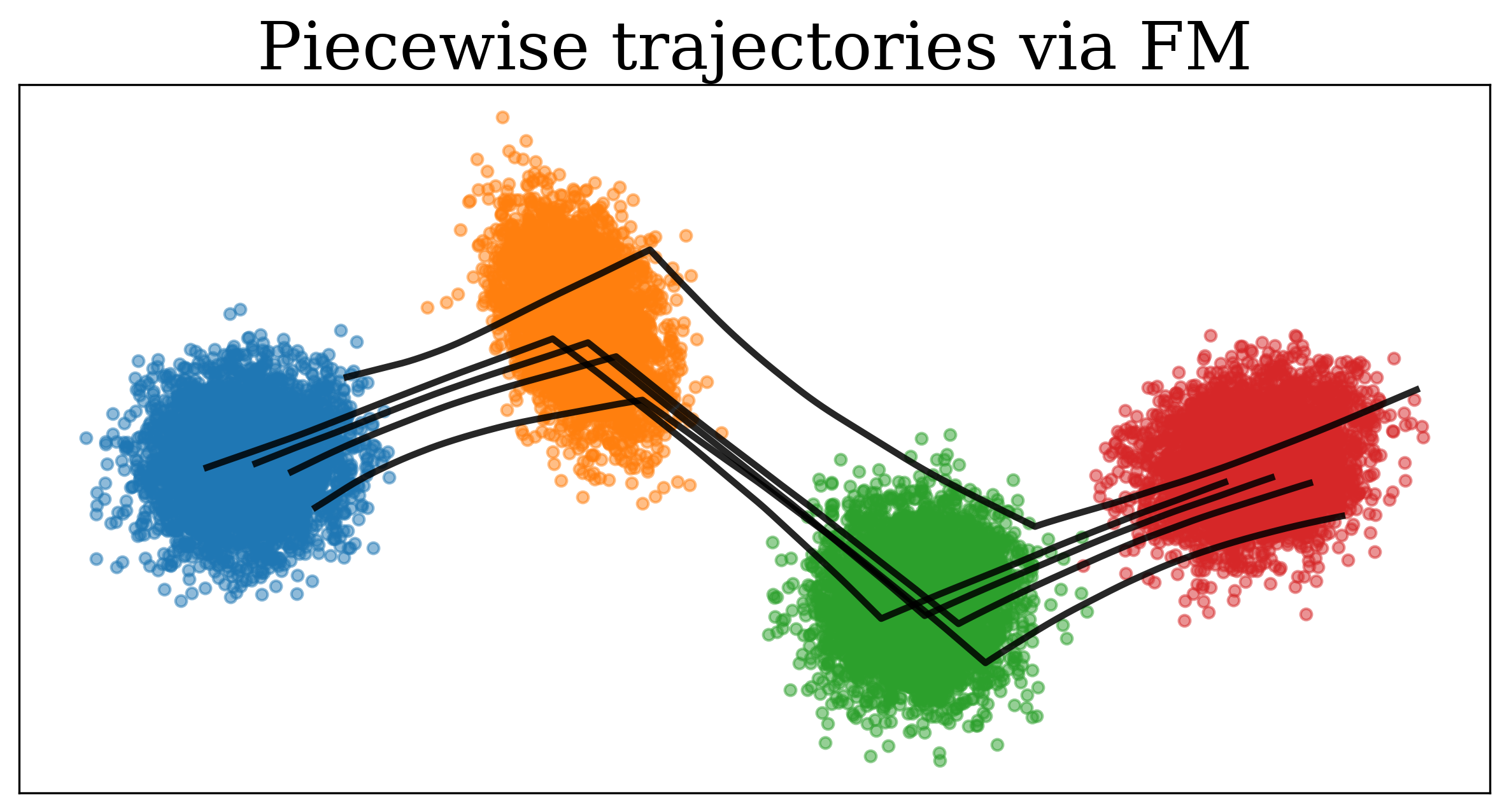}
    \end{subfigure}
    \hfill
    \begin{subfigure}[t]{0.48\linewidth}
        \centering
        \includegraphics[width=\linewidth]{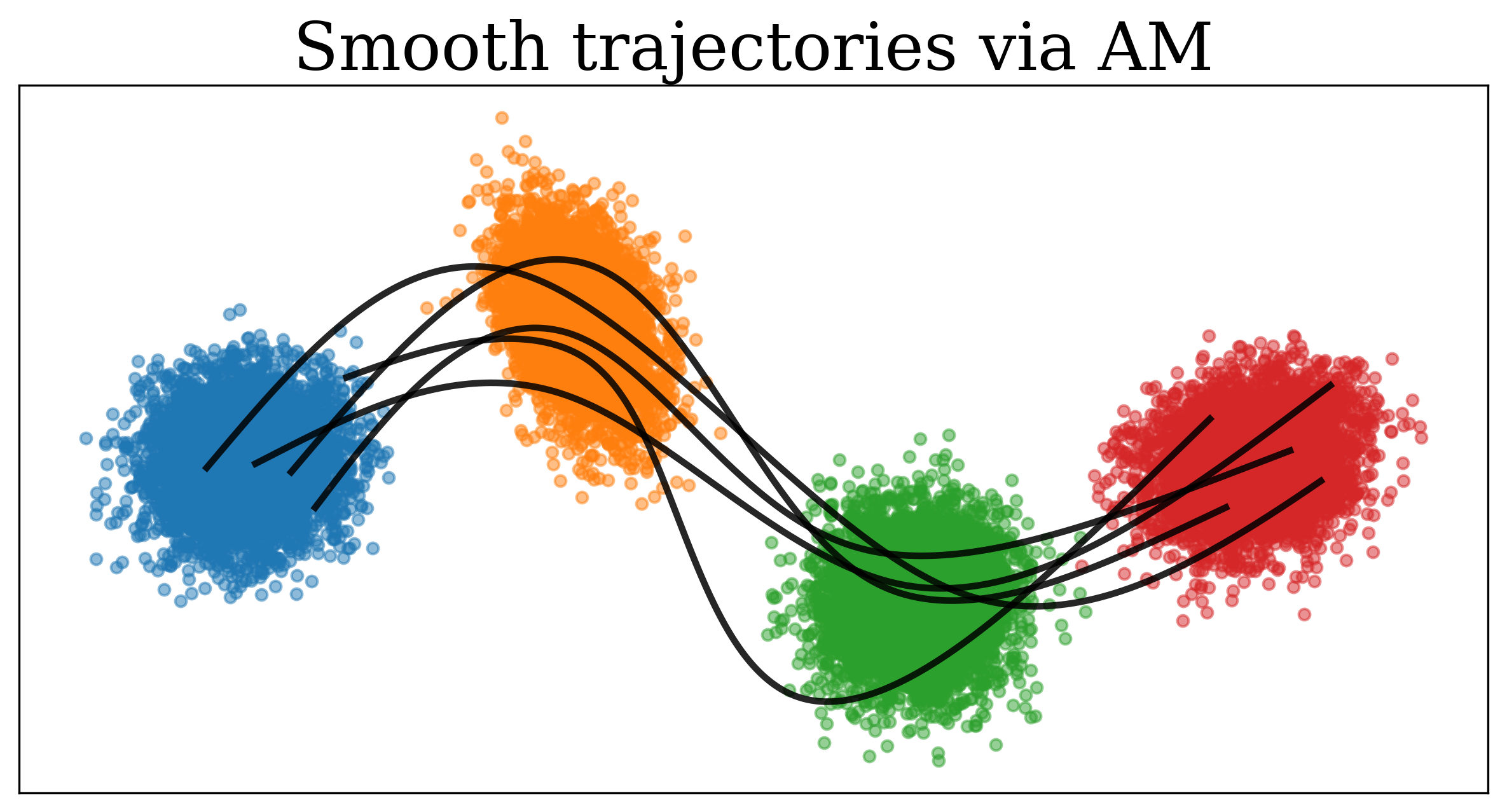}
    \end{subfigure}
    \caption{\texttt{AM} outputs smoother trajectories \textbf{(right)} than piecewise flow matching \textbf{(left)}.}
    \label{fig:piecewise_v_splines}
\end{figure}

Depending on the method, the learn-then-stitch approach unfortunately requires, in the worst case, solving $J$ separate learning problems. Moreover, without additional preprocessing that explicitly enforces smooth dynamics \citep{rohbeck2025modeling}, this naive approach can produce trajectories that lack global smoothness, leading to non-smooth, potentially unrealistic trajectories. The left side of Figure~\ref{fig:piecewise_v_splines} displays trajectories given by training a simple vector field using the flow matching (FM) framework; the trajectories are visibly non-smooth due to the abrupt discontinuity in the velocity field as it switches between the marginals.

Our goal, by contrast, is to generate smooth positional trajectories by construction. To this end, we lift the original problem and consider trajectories given by stochastic differential equations (SDEs) on phase space (see Section~\ref{sec:phasespace} for more background), which are of the form
\begin{align}\label{sde_pp}
    \begin{cases}
        \dd X_t = V_t \dd t \\
        \dd V_t = a_t(X_t, V_t) \dd t + \sqrt{\varepsilon} \dd B_t\,,
    \end{cases}
\end{align}
where $a_t : \R^{2d}\to\R^d$ is an acceleration field and $(B_t)_{t\in[0,1]}$ is standard Brownian motion with diffusion parameter $\eps > 0$. These dynamics generate smooth trajectories by design, as the velocities now have continuous sample paths and thus the positions vary smoothly. Our approach then consists of defining and targeting an appropriate acceleration field that yields dynamics which satisfy the marginal constraints ${\rm Law}(X_{t_j}) = \rho_{t_j}$ for all $j \in \{0,1,\ldots,J\}$.

While we are not the first to consider dynamics of the form \eqref{sde_pp}, related prior works require simulating trajectories as part of their training pipeline and, as a result, are computationally expensive even for low-dimensional datasets \citep{chen2023deep,theodoropoulos2025momentum}.\footnote{Here we simply mean that trajectory rollouts are required somewhere in the training algorithm; we make no distinction between rollouts that require backpropagation and those that do not.} Summarizing our observations so far, it stands to reason that existing algorithms for trajectory inference come with multiple computational burdens. They either are not entirely simulation-free, do not explicitly enforce smooth dynamics or do so only through ad hoc preprocessing; see Section~\ref{sec:related_work} for a thorough comparison.

\paragraph{Contributions.}
In this work, we present \emph{Acceleration Matching} (\texttt{AM}), a novel approach for trajectory inference. Our algorithm is based on the Markovianization of kinetic Brownian bridges in phase space (see Lemma~\ref{lem:bridge_process} and Theorem~\ref{thm:main} in Section~\ref{sec:condacc} for more precise details) and can thus be viewed as an extension of the  flow matching framework \citep{lipman2022flow,albergo2022building,liu2022flow} for learning second-order dynamics in the presence of multiple marginal constraints. Upon constructing a suitable, entirely explicit acceleration field, we naturally obtain a training objective that only assumes access to positional data, is guaranteed to satisfy the constraints \eqref{eq:marg_constraints}, and most importantly, is entirely simulation-free.
From the right-hand side of Figure~\ref{fig:piecewise_v_splines}, we see that trajectories generated by \texttt{AM} are visibly smoother than those generated by piecewise FM.
We demonstrate the computational effectiveness of \texttt{AM} on several standard datasets from the literature in Section~\ref{sec:results}, where our approach is competitive or superior to existing algorithms that explicitly enforce smoothness constraints.

\section{Background}\label{sec:prelim}
\subsection{Notation}

\label{sec:notation}
Let $\cP(\R^d)$ denote the set of probability measures over $\R^d$. For a collection of probability distributions $\{P_i\}_{i=1}^M \subset \cP(\R^d)$, let $\Pi(P_1,\ldots,P_M)$ denote the set of couplings between the measures i.e., for $\pi \in \Pi(P_1,\ldots,P_M)$, $\pi \in \cP((\R^{d})^M)$ such that for every $i \in \{1,\ldots,M\}$, the $i^{\rm th}$ marginal of $\pi$ is $P_i$.
Let ${\cal C}^d \defeq C([0,1];\R^{d})$ be the space of continuous functions from the unit interval to $\R^{d}$ for $d \in \mathbb{N}$, endowed with the uniform topology and the corresponding Borel $\sigma$-field $\mathcal{B}({\cal C}^d)$. The set of probability measures over the resulting measurable space is denoted by $\mathcal{P}({\cal C}^d)$.
We use bold-font vectors (e.g., $\bm{t},\bm{x},\bm{v}$) to denote a concatenated vector; for example, $\bm{t} = (t_0,t_1,\ldots,t_J) \in \R^{J+1}$ and $\bm{x} = (x_0,x_1,\ldots,x_J) \in \R^{d(J+1)}$ (and similarly for $\bm v$).
For a set $A$, let $\bm{1}_A$ denote the $0$-$1$ indicator function over $A$.
For a smooth function $(x,v)\mapsto F(x,v)\in \R$, we write $\nabla_x F(x,v)$ and $\nabla_v F(x,v)$ to explicitly denote with respect to which argument the gradient is taken.

\subsection{Evolution of second-order probability flows}\label{sec:phasespace}
In this section, we briefly review probability paths on phase space, the joint space of positions $x\in\R^d$ and velocities $v \in \R^d$ of a given physical system. Throughout, we write phase space as $\rset^{2d}$.

Our approach is based on processes defined as solutions of second-order SDEs of the form
\begin{align}\label{eq:fp_dyn}
    \dd X_t^a = V_t^a \dd t\,, \quad \dd V_t^a = a_t(X_t^a,V_t^a)\dd t + \sqrt{\eps} \dd B_t\,, \quad t \in\ccint{0,1} \eqsp,
\end{align}
with initial conditions $(X_0^a,V_0^a)$,
where $(B_t)_{t \in [0,1]}$ is {$d$-dimensional} standard Brownian motion, $\eps > 0$, and $(t,x,v) \in \ccint{0,1}  \times \rset^{2d} \mapsto a_t(x,v) \in \R^d$. We then naturally interpret $(a_t)_{t\in[0,1]}$ as an inhomogeneous acceleration field that governs the underlying dynamics.

As an example, if $a_t(x,v) = - \gamma v$ for a parameter $\gamma > 0$, the resulting process is called \emph{kinetic underdamped Brownian motion}, denoted by $\msf R^{\gamma,\eps} \in \mathcal{P}({\cal C}^{2d})$. If $\gamma = 0$ (and thus $a_t \equiv 0$), then we simply refer to $\msf R^{0,\eps}$ as kinetic Brownian motion. Whenever there is no risk of confusion, we omit the explicit dependence on $\gamma \geq 0$ and $\eps > 0$ and simply write $\msf R$ for the reference process.

For any $t\in [0,1]$, let $\mu_t^a \defeq {\rm Law}(X_t^a,V_t^a)$ where $(X_t^a,V_t^a)_{t\in[0,1]}$ evolves according to \eqref{eq:fp_dyn}. Under appropriate conditions on $(a_t)_{t\in[0,1]}$ (see e.g., \cite{bogachev2022fokker}), it can be shown that the pair $(\mu_t^a,a_t)_{t\in[0,1]}$ satisfies the \emph{kinetic Fokker--Planck equation} (also called the Vlasov--Fokker--Planck equation), which is written as
\begin{align}\label{eq:kinetic_fp}\tag{$\msf{KFP}$}
\begin{split}
&\partial_t \mu_t^a + \langle \nabla_x \mu_t^a, v \rangle + \nabla_v \cdot(a_t \mu_t^a) = \frac{\eps}{2}\Delta_v \mu_t^a\,.
\end{split}
\end{align}

\section{Acceleration Matching}\label{sec:main}
In this section, we introduce Acceleration Matching (\texttt{AM}), our proposed algorithm for trajectory inference.

Our main technical observations and contributions appear in Section~\ref{sec:condacc}, where we construct a suitable interpolation process, the acceleration field corresponding to the second-order flow, and the Markovianized process. Our complete algorithm appears in Section~\ref{sec:loss}, which includes a detailed description of our simulation-free training objective. Finally, we describe the inference step of our method in Section~\ref{sec:gen_trajec}.

\subsection{Conditional acceleration fields and Markovianization}\label{sec:condacc}
Recall our second-order dynamics of interest given by \eqref{eq:kinetic_fp}.
Our goal is to define a particular family of acceleration fields that are easy to learn from positional data in the form of marginals $\{(t_j, \rho_{t_j})\}_{j=0}^J$. Importantly, we want the resulting dynamics to yield marginals that agree with \eqref{eq:marg_constraints}, i.e., $\mathrm{Law} (X_{t_j}^a) = \rho_{t_j}$.

To this end, take an arbitrary coupling $\pi \in \Pi(\rho_{t_0},\ldots,\rho_{t_J})$; for simplicity, consider the independent coupling of the marginals. Let $(X_t,V_t)_{t\in\ccint{0,1}} \sim \msf R$ be a kinetic (underdamped) Brownian motion for some fixed $\gamma \geq 0$ and $\varepsilon >0$. We define our interpolation process $\mathsf P^I \in \cP({\cal C}^{2d})$ in two steps:
\begin{enumerate}
    \item For $\bm{t} = (t_0,\ldots,t_J) \in [0,1]^{J+1}$, sample $X_{\bm t}  \sim \pi$
    \item Sample the remainder of the path from the regular conditional distribution $\mathsf R_{|\bm t}$,
\end{enumerate}
where a formal definition of $\msf R_{|\bm t}$ is given in Appendix~\ref{sec:supp_section_main}. Mathematically, our interpolation process is written as
\begin{align}\label{eq:interp-process}
    {\msf P}^I = \int \msf R_{|\bm t}(\bm x,\cdot)\,\pi(\rmd \bm x)\,,
\end{align}
where $\msf P^I \in {\cal P}({\cal C}^{2d})$. As we assume query access to $\pi$ (at least in the case of independent marginals), it remains to efficiently  sample from our conditional distribution $\msf R_{|\bm t}$. Thankfully, it turns out that this remaining step corresponds to sampling from a suitable Gaussian distribution. Indeed, our next result shows that if the initial velocity $V_0$ is Gaussian, then $V_{\bm t}$ given $X_{\bm t}$ is also Gaussian.
\begin{lem}
  \label{lem:condi_v_t}
    Assume that if $(X_0,V_0) \sim \msf R_0$, the conditional distribution of $V_0$ given $X_0$ is Gaussian almost surely with conditional mean $x_0 \mapsto m_0(x_0)$ and covariance matrix $x_0 \mapsto \Sigma_0(x_0)$. Then, the conditional distribution of $V_{\bm t}$ given $X_{\bm t}$ is also Gaussian with explicit conditional mean and covariance; see \eqref{eq:mean_cov_cond}.
\end{lem}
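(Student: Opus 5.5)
The plan is to exploit the linearity of the reference dynamics. Under $\msf R = \msf R^{\gamma,\eps}$ the pair $(X_t,V_t)$ solves a linear SDE,
\begin{equation*}
\dd \begin{pmatrix} X_t \\ V_t\end{pmatrix} = A \begin{pmatrix} X_t \\ V_t\end{pmatrix}\dd t + \begin{pmatrix} 0 \\ \sqrt{\eps}\,\Idd\end{pmatrix}\dd B_t, \qquad A = \begin{pmatrix} 0 & \Idd \\ 0 & -\gamma \Idd\end{pmatrix},
\end{equation*}
so that $(X_t,V_t) = \rme^{tA}(X_0,V_0) + \int_0^t \rme^{(t-s)A}(0,\sqrt{\eps}\,\rmd B_s)$. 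Hence, conditionally on the initial state $(X_0,V_0)=(x_0,v_0)$, the finite-dimensional vector $(X_{t_1},\ldots,X_{t_J},V_{t_0},\ldots,V_{t_J})$ is Gaussian. Its mean is affine in $(x_0,v_0)$ and its covariance is deterministic; both are given by the explicit blocks of $\rme^{tA}$ and of the Gramians $\int_{s}^{t}\rme^{(t-u)A}BB^\top \rme^{(t'-u)A^\top}\dd u$. Since $t_0=0$, the conditioning variable $X_{\bm t}$ contains $X_0 = x_0$ exactly.

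Next, I would integrate out $V_0$. I condition on $X_0 = x_0$ and use the hypothesis $V_0\mid X_0=x_0 \sim \gauss(m_0(x_0),\Sigma_0(x_0))$, with $V_0$ independent of the Brownian increments. The map $(V_0, \text{noise}) \mapsto (X_{t_1},\ldots,X_{t_J},V_{\bm t})$ is affine, with shift depending only on $x_0$. Therefore the conditional law of $Z \defeq (X_{t_{1:J}}, V_{\bm t})$ given $X_0=x_0$ is jointly Gaussian. Writing $Z = (Z_1,Z_2)$ with $Z_1 = X_{t_{1:J}}$ and $Z_2 = V_{\bm t}$, I would record its mean $(\mu_1(x_0),\mu_2(x_0))$ and covariance blocks $C_{11}(x_0), C_{12}(x_0), C_{22}(x_0)$. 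These come from the propagated $m_0,\Sigma_0$ plus the noise Gramians.

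Finally, I apply the Gaussian conditioning formula to condition on $Z_1 = (x_1,\ldots,x_J)$. This yields $V_{\bm t}\mid X_{\bm t}=\bm x \sim \gauss(m(\bm x),\Sigma(\bm x))$ with
\begin{equation*}
m(\bm x) = \mu_2(x_0) + C_{21}C_{11}^{-1}\big((x_1,\ldots,x_J) - \mu_1(x_0)\big),\qquad \Sigma(\bm x) = C_{22} - C_{21}C_{11}^{-1}C_{12},
\end{equation*}
which is the formula to be stated as \eqref{eq:mean_cov_cond}. The tower property, or equivalently identifying the regular conditional distribution $\msf R_{|\bm t}$ through disintegration first in $x_0$ and then in the remaining positions, justifies the two-stage conditioning almost surely.

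The main obstacle is the invertibility of $C_{11}$, i.e.\ nondegeneracy of $(X_{t_1},\ldots,X_{t_J})$ given $X_0$. I would handle it by showing that the noise contribution alone is already nondegenerate. For distinct times $0<t_1<\cdots<t_J$, the positions $X_{t_j} - \text{(drift part)} = \int_0^{t_j} \phi(t_j-s)\sqrt{\eps}\,\rmd B_s$ have covariance equal to a Gram matrix of the functions $s\mapsto \phi(t_j-s)\bm 1_{[0,t_j]}(s)$ in $L^2$. Here $\phi(u) = (1-\rme^{-\gamma u})/\gamma$, or $\phi(u) = u$ when $\gamma=0$. These functions are linearly independent because their supports end at distinct points and $\phi>0$ on $(0,\infty)$, so the Gram matrix is positive definite. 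Adding the positive semidefinite contribution from $\Sigma_0$ preserves invertibility, so the conditioning formula is well defined for every $x_0$.
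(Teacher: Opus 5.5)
Your proposal is correct and follows essentially the same route as the paper: Gaussianity given $(X_0,V_0)$ from the linear SDE, integrating out $V_0\sim\gauss(m_0(x_0),\Sigma_0(x_0))$, then Gaussian conditioning, with nondegeneracy of the positional noise proved by the same ``largest nonzero index on $(t_{m-1},t_m)$'' argument as in Lemma~\ref{lem:pseudo_inverse}. The only cosmetic difference is that you drop $X_0$ from the conditioning vector and invert the $J\times J$ block directly, whereas the paper keeps $X_{\bm t}$ whole and uses the Moore--Penrose pseudoinverse $\Sigma_{\bm X}^\dagger=\mathrm{diag}(0,\Theta^{-1})$; this gives the identical formula because the $X_0$ row and column of $\Sigma_{\bm X}$ vanish.
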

In brief, given only position knots $X_{\bm t}$, we can easily sample velocities $V_{\bm t}\,|\, X_{\bm t}$ in closed-form. The proof is an easy consequence of Lemma~\ref{lem:fund} and is deferred to Appendix~\ref{sec:proof-lemma-refl}.

The next step is to complete the sampling procedure for the \emph{entire path} $(X_t, V_t)_{t\in[0,1]}$ given the $J+1$ knots $(X_{\bm t}, V_{\bm t})$ in phase space. In the case of standard Brownian motion, this would be immediate, as it reduces to simulating independent Brownian bridges. Remarkably, an analogous characterization holds for the kinetic damped Brownian motion in phase space. For the remainder of this section, we focus on the case $\gamma = 0$, deferring the general case to the appendix.
\begin{lem}[Kinetic Brownian bridges]\label{lem:bridge_process}
Fix $t_j < t_{j+1} \in [0,1]$, $x_j,x_{j+1} \in \R^d$, $v_{j}, v_{j+1} \in \R^d$, and let $\msf R$ be kinetic Brownian motion with $\eps > 0$. {In the interval $[t_j,t_{j+1})$, the bridge of the reference process, i.e., the conditional distribution of $(X_t,V_t)_{t\in\ccint{t_j,t_{j+1}}}$ given  $(X_{t_{j}},V_{t_{j}}) = (x_j, v_j)$ and $(X_{t_{j+1}},V_{t_{j+1}}) = (x_{j+1}, v_{j+1})$} is the unique weak solution starting from $(X_{t_j},V_{t_j})$ to
\begin{align}\label{eq:bridge_proc}
        \dd X_t& = V_t\dd t \\
        \dd V_t &= a^j_t(X_t,V_t; X_{t_{j+1}},V_{t_{j+1}})\dd t + \sqrt{\varepsilon} \dd{ B_t^j}\,,
\end{align}
on $[t_j,t_{j+1})$, { where $\{(B_t^{k})_{t \in\ccint{t_k,t_{k+1}}} \, :\, k \in \{0,\ldots,J-1\}\}$ are independent $d$-dimensional Brownian motions}
and  for any $x,v,x_{j+1},v_{j+1} \in\rset^d$ and $t \in\ccint{t_j,t_{j+1}}$,
\begin{align}\label{eq:a_jt}
        a_t^{j}(x,v;x_{j+1},v_{j+1}) &=
    \frac{6(x_{j+1} - x)}{(t_{j+1}-t)^2} - \frac{2(v_{j+1} + 2v)}{(t_{j+1}-t)}\,.
\end{align}
\end{lem}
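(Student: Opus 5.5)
The plan is to use Doob's $h$-transform for the kinetic Brownian motion $\msf R = \msf R^{0,\eps}$, which is a time-homogeneous Gaussian Markov diffusion on $\rset^{2d}$.

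First, by the Markov property of $\msf R$, the law of $(X_t,V_t)_{t\in[t_j,t_{j+1}]}$ given the knots $(X_{t_k},V_{t_k})_{k}$ depends only on the two endpoints $(x_j,v_j)$ and $(x_{j+1},v_{j+1})$. The pieces on different intervals are conditionally independent, which justifies the independent Brownian motions $B^k$ in the statement. It therefore suffices to identify the single-interval bridge.

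Next, write down the transition density of $\msf R$. Set $h = t_{j+1}-t$. Given $(X_t,V_t)=(x,v)$, the pair $(X_{t_{j+1}},V_{t_{j+1}})$ is Gaussian with mean $(x+hv,\,v)$ and covariance $\eps\,C_h\otimes \Idd$, where
\[
C_h = \begin{pmatrix} h^3/3 & h^2/2\\ h^2/2 & h\end{pmatrix},\qquad
C_h^{-1} = \begin{pmatrix} 12/h^3 & -6/h^2\\ -6/h^2 & 4/h\end{pmatrix}.
\]
The $h$-function is $\varphi_t(x,v) = p_{t,t_{j+1}}\big((x,v),(x_{j+1},v_{j+1})\big)$. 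Its log-gradient in $v$ only involves the quadratic form. Write $\xi = x_{j+1}-x-hv$ and $\eta = v_{j+1}-v$, so that $\partial_v\xi = -h$ and $\partial_v\eta=-1$. Then
\[
\nabla_v \log\varphi_t = \frac1\eps\Big[\,h\big(\tfrac{12}{h^3}\xi-\tfrac{6}{h^2}\eta\big)+\big(-\tfrac{6}{h^2}\xi+\tfrac4h\eta\big)\Big]
= \frac1\eps\Big[\tfrac{6}{h^2}\xi-\tfrac{2}{h}\eta\Big].
\]
Multiplying by $\eps$, the drift correction $\eps\nabla_v\log\varphi_t$ equals
\[
\frac{6(x_{j+1}-x)}{h^2}-\frac{6v}{h}-\frac{2(v_{j+1}-v)}{h}
= \frac{6(x_{j+1}-x)}{h^2}-\frac{2(v_{j+1}+2v)}{h},
\]
which is exactly \eqref{eq:a_jt}.

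Then invoke the $h$-transform theorem. The function $\varphi$ is space-time harmonic for the generator $\partial_t + v\cdot\nabla_x + \tfrac\eps2\Delta_v$; this is the backward Kolmogorov equation for the transition density in its initial variables. For $s<t_{j+1}$, the bridge law restricted to $\mathcal F_s$ therefore has density $\varphi_s(X_s,V_s)/\varphi_{t_j}(x_j,v_j)$ with respect to $\msf R$ started at $(x_j,v_j)$. By Itô's formula, $\log\varphi$ gives a Girsanov density with drift $\eps\nabla_v\log\varphi$ added to the $V$-equation only. Girsanov's theorem then shows that the bridge solves \eqref{eq:bridge_proc} on each $[t_j,s]$, and hence on $[t_j,t_{j+1})$.

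For uniqueness, note that the drift is affine in $(x,v)$ with coefficients continuous on $[t_j,s]$. The SDE is therefore linear, has pathwise unique solutions, and so has a unique weak solution on each $[t_j,s]$. Consistency across $s$ gives uniqueness on $[t_j,t_{j+1})$.

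The main obstacle is rigor near the endpoint $t\to t_{j+1}$, where the drift blows up. I would avoid this issue by working only on $[t_j,s]$ with $s<t_{j+1}$, as the statement already restricts to the half-open interval. An optional check is that the solution converges a.s. to $(x_{j+1},v_{j+1})$, which can be seen from the explicit Gaussian solution of the linear SDE.
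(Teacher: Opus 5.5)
Your proof is correct and uses the same core idea as the paper: Doob's $h$-transform with the Gaussian transition density to $(x_{j+1},v_{j+1})$ as the space-time harmonic function, so the bridge drift is $\eps\nabla_v\log p_{t_{j+1}-t}(\cdot\,;x_{j+1},v_{j+1})$. The execution differs in two ways.

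First, the paper works with generators and marginal flows. It writes $\mu^{\msf{br}}_t\propto p_{t-t_j}\,p_{t_{j+1}-t}$, then uses the forward and backward Kolmogorov equations together with the product rule $\cL_\gamma(gh)=(\cL_\gamma g)h+g\,\cL_\gamma h+\eps\langle\nabla_v g,\nabla_v h\rangle$. This shows the flow weakly solves the Fokker--Planck equation of $\cL_\gamma+\eps\langle\nabla_v\log p_{t_{j+1}-t},\nabla_v\,\cdot\,\rangle$, and the paper concludes because both processes are Markov. You instead identify the bridge law on $\mathcal F_s$, $s<t_{j+1}$, through the density $\varphi_s(X_s,V_s)/\varphi_{t_j}(x_j,v_j)$ and apply Girsanov. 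That gives the weak-solution statement directly on path space, not only at the level of one-time marginals. Your uniqueness argument (a linear SDE with continuous coefficients on $[t_j,s]$, then consistency in $s$) justifies a claim the paper states without separate proof.

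Second, the paper computes the score for general $\gamma\ge 0$ and gets \eqref{eq:a_jt} as the $\gamma\to0$ limit of a calculation it calls tedious. You invert $C_h$ explicitly at $\gamma=0$, so the formula can be checked in a few lines. The paper's version covers the damped reference process at the same time. Your Girsanov argument would extend to that case unchanged, with the extra $-\gamma V_t$ drift and the $\gamma$-dependent covariance.
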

The proof of Lemma~\ref{lem:bridge_process} is based on an application of Doob's $h$-transform on phase space; see Appendix~\ref{sec:proof-bridge-process}. An important corollary of Lemma~\ref{lem:bridge_process} is that the non-Markovian path measure $\msf P^I$ can now be represented as a solution to a sequence of stochastic differential equations: given $(X_{\bm t}^I, V_{\bm t}^I)$, Lemma~\ref{lem:bridge_process} shows that for any  $j\in\{0,\ldots,J-1\}$, $(X^I_{t},V_t^I)_{t\in[t_j,t_{j+1}]}$ is the unique weak solution starting from $(X_{t_j},V_{t_j})$ to\looseness-1
\begin{align}\label{conditional_acceleration_SDE}
\begin{cases}
\dd X^I_t = V^I_t \dd t\\
\dd V^I_t = a_t^j(X^I_t,V^I_t ;X^I_{t_{j+1}},V^I_{t_{j+1}})\dd t + \sqrt{\eps}\dd B_t^j\,,
\end{cases}\raisetag{1.5\baselineskip}
\end{align}
over the interval $t\in[t_j,t_{j+1}]$.

Based on this observation, it becomes transparent that $(X_t^I,V_t^I)_{t\in\ccint{0,1}}$ is not Markov. However, we can find a diffusion process $(X_t^{M},V_{t}^M)_{t \in \ccint{0,1}}$ sharing the same marginals, replacing the acceleration field in \eqref{conditional_acceleration_SDE} with a suitable Markovianization that will be easy to learn using a parametric model. This is formalized by our next result.
\begin{thm}\label{thm:main}
  Consider $(X_t^I,V_t^I)_{t\in\ccint{0,1}} \sim \msf P^I$ be an interpolated process associated with a coupling $\pi \in \Pi(\rho_{t_0},\ldots,\rho_{t_J})$ and a kinetic Brownian distribution $\msf R$ for $\eps >0$ fixed.
Consider an acceleration field $(a_t)_{t\in\ccint{0,1}}$ satisfying almost surely
\begin{align*}
    a_t^{M}(X_{t}^I,V_t^I) \defeq \sum_{j=0}^{J-1} \bm{1}_{[t_j,t_{j+1})}(t) \, \bbE[a_t^j(X_t^I,V_t^I;X_{t_{j+1}}^I,V_{t_{j+1}}^I)|(X_t^I,V_t^I)]\,,
\end{align*}
where $a_t^j$ is given by \eqref{eq:a_jt}.  {In addition, suppose that $a_t^M$ is Lipschitz in all variables for any $t\in\ccint{0,1}$}. For $(X_0^M,V_0^M) \sim \mathsf P^I_0$, consider $(X_t^M,V_t^M)_{t\in\ccint{0,1}}$ the diffusion process solution over $ [0,1]$
\begin{align*}
    \begin{cases}
    \dd X_t^M = V_t^M \dd t \\
    \dd V_t^M = a_t^{M}(X_t^M,V_t^M)\dd t + \sqrt{\eps}\dd B_t\,.
    \end{cases}
\end{align*}
Then, for any $t \in [0,1]$, it holds that $\mathrm{Law}(X_t^M,V_t^M)=\mathrm{Law}(X_t^I,V_t^I)$, and therefore the positional marginals match the prescribed constraints.
\end{thm}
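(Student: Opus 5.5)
The plan is the standard Markovian-projection (mimicking) argument used in flow matching, adapted to the kinetic Fokker--Planck equation \eqref{eq:kinetic_fp}. First I will show that the marginal flow $\mu_t^I \defeq \mathrm{Law}(X_t^I,V_t^I)$ is a weak (distributional) solution of \eqref{eq:kinetic_fp} with acceleration field $a_t^M$. Next I will show that $\mu_t^M \defeq \mathrm{Law}(X_t^M,V_t^M)$ solves the same equation with the same initial datum $\msf P^I_0$. Finally I will invoke uniqueness for this linear equation under the Lipschitz assumption on $a^M$.

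\textbf{Step 1: the interpolated marginals solve \eqref{eq:kinetic_fp}.} I work interval by interval, $t\in[t_j,t_{j+1})$. Fix a test function $\varphi\in C_c^\infty(\R^{2d})$. Conditionally on the knot $(X^I_{t_{j+1}},V^I_{t_{j+1}})$, Lemma~\ref{lem:bridge_process} and \eqref{conditional_acceleration_SDE} give $(X^I,V^I)$ on $[t_j,t_{j+1})$ as a solution of an SDE driven by $B^j$. Since the $B^j$ are independent, $B^j$ should be a Brownian motion for the filtration enlarged by $\sigma(X^I_{t_{j+1}},V^I_{t_{j+1}})$. I will justify this by constructing the process via the Doob $h$-transform, so that the bridge law is exactly the law of the SDE given the endpoint. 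Applying It\^o's formula to $\varphi(X^I_t,V^I_t)$ and taking expectations (the stochastic integral is a true martingale since $\nabla_v\varphi$ is bounded), for $t_j\le s<t<t_{j+1}$:
\begin{align*}
\bbE\varphi(X^I_t,V^I_t)-\bbE\varphi(X^I_s,V^I_s)=\int_s^t \bbE\Big[\langle V^I_r,\nabla_x\varphi\rangle+\langle a^j_r(X^I_r,V^I_r;X^I_{t_{j+1}},V^I_{t_{j+1}}),\nabla_v\varphi\rangle+\tfrac{\eps}{2}\Delta_v\varphi\Big]\dd r,
\end{align*}
with all derivatives of $\varphi$ evaluated at $(X^I_r,V^I_r)$. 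By the tower property and the definition of $a^M_r$, the middle term equals $\bbE\langle a^M_r(X^I_r,V^I_r),\nabla_v\varphi(X^I_r,V^I_r)\rangle$. This is exactly the weak form of \eqref{eq:kinetic_fp} with field $a^M$ on $[t_j,t_{j+1})$.

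To glue the intervals, I use that $t\mapsto\mu^I_t$ is weakly continuous, because paths of $\msf P^I$ are continuous. Letting $t\uparrow t_{j+1}$ and using dominated convergence then yields the weak formulation on all of $[0,1]$.

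\textbf{Main obstacle: integrability near the knots.} The bridge drift $a^j_r$ blows up like $(t_{j+1}-r)^{-2}$, so I need $\int_s^t\bbE|a^j_r|\,\dd r<\infty$ to justify both the expectation and the conditional expectation. My first approach is a Gaussian computation. Given both knots, the bridge is Gaussian, and $X_r$ deviates from its endpoint-matching cubic by $O((t_{j+1}-r)^{3/2})$ while $V_r$ deviates by $O((t_{j+1}-r)^{1/2})$. Substituting into \eqref{eq:a_jt}, the singular terms cancel to give $|a^j_r|=O((t_{j+1}-r)^{-1/2})$ in $L^1$, which is integrable. For this I need finite first moments of the knots, i.e. of $\pi$ and of the Gaussian velocities from Lemma~\ref{lem:condi_v_t}, which I will assume. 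If this estimate is delicate, I will fall back to working on $[t_j,t_{j+1}-\delta]$ and pass $\delta\to0$ using only weak continuity.

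\textbf{Steps 2 and 3: the Markov process and uniqueness.} Because $a^M$ is Lipschitz, the SDE for $(X^M,V^M)$ has a unique strong solution with finite moments. By the same It\^o computation, without any conditioning, $\mu^M$ solves \eqref{eq:kinetic_fp} with field $a^M$ and $\mu^M_0=\msf P^I_0=\mu^I_0$. For uniqueness I will use a duality argument. For a terminal test function $\psi$, the backward Kolmogorov equation $\partial_s u+\langle v,\nabla_x u\rangle+\langle a^M_s,\nabla_v u\rangle+\tfrac{\eps}{2}\Delta_v u=0$, $u_t=\psi$, has a solution $u_s(x,v)=\bbE[\psi(X^M_t,V^M_t)\mid (X^M_s,V^M_s)=(x,v)]$. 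By the Lipschitz flow, $u$ is Lipschitz with linear growth. Testing both weak solutions against $u$ gives $\int\psi\,\dd\mu^I_t=\int u_0\,\dd\mu^I_0=\int u_0\,\dd\mu^M_0=\int\psi\,\dd\mu^M_t$.

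The regularity of $u$ is needed to use it as a test function. I will handle this either by mollifying $a^M$ or by citing the uniqueness theorem for Fokker--Planck equations with Lipschitz drift in \cite{bogachev2022fokker}, which applies because $\mu^I$ has finite first moments. This gives $\mu^M_t=\mu^I_t$ for all $t$. Projecting onto positions and using $X^I_{t_j}\sim\rho_{t_j}$ from $\pi\in\Pi(\rho_{t_0},\ldots,\rho_{t_J})$ yields the marginal constraints.
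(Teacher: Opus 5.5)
Your proposal is correct and follows the same route as the paper's proof. You apply It\^o's formula to $f(X^I_t,V^I_t)$ along the bridge SDE, then use the tower property to replace $a^j_t$ by $a^M_t$, which shows that $\mu^I_t$ weakly solves \eqref{eq:kinetic_fp} with field $a^M$; you then conclude by uniqueness for that equation with initial datum $\msf P^I_0$. Your extra care fills in details the paper leaves implicit without changing the approach: the $O((t_{j+1}-r)^{-1/2})$ $L^1$ bound on the bridge drift near the knots, the conditioning/filtration justification, and an explicit uniqueness argument, where the paper simply invokes uniqueness ``by assumption''.
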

In brief, Theorem~\ref{thm:main} states that the marginal laws of the original non-Markovian interpolant can be recovered through a suitably defined acceleration field. This construction should be viewed as a direct \emph{lifted} analogue of, e.g., flow matching \citep{albergo2022building,lipman2022flow,liu2022flow} in the case $J=1$ (which corresponds to two marginals). Furthermore, in this way, we can impose multiple marginal constraints along the path, rather than only initial and terminal constraints.

\subsection{Loss function and training}\label{sec:loss}
We now describe a simple algorithm for learning the Markovian acceleration field from Theorem~\ref{thm:main}. Let $a_\theta$ denote a parametrized neural network such that $a_\theta : [0,1] \times \R^{2d} \to \R^d$. Given that $a_t^M$ is a (piecewise) conditional expectation, it is natural to consider the following population loss
\begin{align}\label{eq:pop_loss}
\begin{split}
        \cL_{\texttt{AM}}(a_\theta) = \sum_{j=0}^{J-1}\int_{t_j}^{t_{j+1}} \bbE\| a_\theta(t,X_t,V_t) - a_t^j(X_t,V_t;X_{t_{j+1}},V_{t_{j+1}})\|^2
        \dd t\,,
\end{split}
\end{align}
where the expectation is under $\mathsf P^I$.
It is therefore easy to see that $a_t^M$ is the unique minimizer of \eqref{eq:pop_loss}. Indeed, this holds by standard properties of conditional expectations as well as the piecewise nature of the acceleration field $a_t^M$ and so we omit the proof.

We now describe a practical implementation of \eqref{eq:pop_loss} in more detail; see also Algorithm~\ref{alg:stocacc}.
Recall that access to samples of positional data $X_{t_j}$ is readily available from the marginals $\{\rho_{t_j}\}_{j=0}^{J}$.
We then sample $V_{\bm t}\mid X_{\bm t}$ efficiently by Lemma~\ref{lem:condi_v_t} where we take ${\rm Law}(V_0\mid X_0) = \gauss(0,\sigma_v^2I)$ for $\sigma_v^2 >0$ for simplicity.
For each $j$, we first draw $s_j \sim {\rm Unif}((t_j, t_{j+1}))$, and again use Gaussian conditioning formulae to sample the kinetic Brownian bridge between consecutive knots; see Appendix~\ref{sec:bridge_sampling} for more details. {For uneven observation times, we set $\Delta_j = t_{j+1}-t_j$ and use it to weight the contribution from the $j^{\rm th}$ interval.} Note that the ``for loop'' in Algorithm~\ref{alg:stocacc} is merely for illustrative purposes; in practice this can be be parallelized or be replaced by sampling the indices.

We stress that unlike prior work, our loss does not merely decompose into a learning problem on disjoint intervals, as the velocities used to train $a_\theta$ are jointly sampled across time conditioned on all positional knots. This allows us to learn smooth dynamics purely from the formulation of the learning problem instead of explicitly imposing spline-like dynamics into the training pipeline.

\begin{algorithm}[t]
\caption{Training algorithm for Acceleration Matching}
\label{alg:stocacc}
\begin{algorithmic}[1]
\REQUIRE Marginals $\{\rho_{t_j}\}_{j=0}^J$; coupling $\pi \in \Pi(\rho_{t_0},\ldots,\rho_{t_J})$; network
$a_\theta : [0,1]\times \mathbb{R}^{2d} \to \mathbb{R}^d$
\WHILE{not converged}
    \STATE Sample $\bm{x} \sim \pi$
    \STATE Sample $\bm v \mid \bm x$ via Gaussian conditioning
    \FOR{$j \in \{0,1,\ldots,J-1\}$}
    \STATE {Set $\Delta_j = t_{j+1}-t_j$}
    \STATE Sample $s_j \sim \mathrm{Unif}((t_j,t_{j+1}))$
    \STATE Sample $(x_{s_j},v_{s_j}) \mid (\bm x_{t_j},\bm v_{t_j}),(\bm x_{t_{j+1}},\bm v_{t_{j+1}})$ via Gaussian conditioning
    \ENDFOR
    \STATE Compute loss $\widehat{\mathcal{L}}_{\mathtt{AM}}(a_\theta) = \sum_{j=0}^{J-1}{\Delta_j}\|a_\theta(s_j,x_{s_j},v_{s_j}) - a_{s_j}^j(x_{s_j},v_{s_j}; \bm x_{t_{j+1}}, \bm v_{t_{j+1}})\|^2$
    \STATE Update parameters $\theta \gets \texttt{optim}(\theta, \nabla_\theta
    \widehat{\mathcal{L}}_{\mathtt{AM}}(a_\theta))$
\ENDWHILE
\STATE \textbf{Return} $a_\theta$
\end{algorithmic}
\end{algorithm}

\subsection{Generating trajectories}\label{sec:gen_trajec}
We now describe how to generate trajectories given a learned acceleration field obtained following Algorithm~\ref{alg:stocacc}. The procedure has two stages: first, we learn an initial velocity sampler, then we integrate the learned dynamics in phase space.

Theorem~\ref{thm:main} requires the initial phase space pair to satisfy
\begin{align*}
      (X_0^M,V_0^M) \sim \msf P_0^I
\end{align*}
in order for the marginals generated by the Markovian acceleration field to agree at all times. At inference, however, only the initial position law $X_0\sim\rho_0$ is prescribed. In order to approximate the missing conditional law ${\rm Law}(V_0\mid X_0)$, we will train a conditional sampler $q_\phi(\cdot\mid x_0)$. We obtain valid training data $(X_0^{(i)},V_0^{(i)})_{i=1}^N$ by first sampling knots $(X_{t_0},\ldots,X_{t_J})$, then sampling the corresponding conditional velocity $V_0\,|\, X_{\bm t}$ as before, retaining only the initial position and velocity.

For simplicity, one can take $q_\phi$ to be a (conditional diagonal) Gaussian,
\[
    q_\phi(\cdot\mid x_0)
    =
    \gauss(m_\phi(x_0),\Sigma_\phi(x_0)),
\]
where $m_\phi$ and $\Sigma_\phi$ are outputs of a neural network $\phi$ that takes in $x_0$ as input, and we train the weights of $\phi$ (and thus $m_\phi$ and $\Sigma_\phi$) via maximum likelihood:
\begin{align*}
    \min_\phi \sum_{i=1}^N
    \frac{1}{2}
    \langle
    v_0^{(i)} - m_{\phi}(x_0^{(i)}),
    \Sigma_{\phi}(x_0^{(i)})^{-1}
    \bigl(v_0^{(i)} - m_{\phi}(x_0^{(i)})\bigr)
    \rangle
    +
    \frac{1}{2}\log\det \Sigma_{\phi}(x_0^{(i)})\,.
\end{align*}
See Appendix~\ref{sec:cond_sampling} for more details. While other conditional sampling methods can also be used (such as those proposed by \cite{baptista2024conditional}), we found this simple Gaussian model sufficient for our purposes.

Given a new $x_0\sim\rho_0$, we then sample $v_0\sim q_\phi(\cdot\mid x_0)$ and generate trajectories by discretizing the phase-space SDE in \eqref{sde_pp} using, for example, a standard Euler--Maruyama scheme; see Algorithm~\ref{alg:em_sampler}.

\begin{algorithm}[t]
\caption{Euler--Maruyama sampler}
\label{alg:em_sampler}
\resizebox{\linewidth}{!}{
\begin{minipage}{\linewidth}
\begin{algorithmic}[1]
\REQUIRE Initial state $x_0 \sim \rho_0$, learned conditional sampler $q_\phi(\cdot|x_0)$; learned acceleration field
$\widehat a_\theta$; noise scale $\varepsilon$; number of steps $K$
\STATE Draw $v_0 \sim q_\phi(\cdot\, |\, x_0)$ and set $h = 1/K$
\FOR{$k=0,\ldots,K-1$}
    \STATE Update position $ x_{(k+1)h}
        \gets
        x_{kh} + h\, v_{kh}$
    \STATE Draw $\xi_k \sim \mathcal{N}(0,I)$
    \STATE Update velocity $v_{(k+1)h}
        \gets
        v_{kh}
        + h \, \widehat a_\theta(kh, x_{kh}, v_{kh})
        + \sqrt{\varepsilon h}\,\xi_k$
\ENDFOR
\STATE \textbf{Return} $(x_{kh})_{k=0}^{K}$
\end{algorithmic}
\end{minipage}
}
\end{algorithm}

\section{Numerical experiments}\label{sec:results}
We now present a set of experiments comparing \texttt{AM} to other state-of-the-art methods for the task of trajectory inference. Our main comparisons are with 3MSBM \citep{theodoropoulos2025momentum} (which is also based on a lifting to phase space) and simulation-free flow-matching methods such as MMFM~\citep{rohbeck2025modeling}, OT-CFM~\citep{tong2023improving,pooladian2023multisample}, and OT-MFM~\citep{kapusniak2024metric}.

We consider three variants of MMFM; see Section~\ref{sec:related_work} for more details. The first version uses a constant noise coefficient, denoted $\text{MMFM}_{\rm const}$, which induces Gaussian smoothing at the marginal knots $\rho_{t_j}$ rather than matching the marginals exactly.
The second variant, denoted $\text{MMFM}_{\rm van}$, uses a time-varying noise coefficient that vanishes at the marginal knots and therefore interpolates the prescribed marginals.\footnote{Both of these variants appear in the official implementation by \cite{rohbeck2025modeling}. In particular, the majority of the experiments use $\text{MMFM}_{\rm const}$.} For $\text{MMFM}_{\bullet}$, the training parameters are all the same as our method. We also use the official \href{https://github.com/Genentech/MMFM}{MMFM repository}, with all default parameters, as a baseline for the low-dimensional experiments, which we denote simply by $\text{MMFM}$\@. We were {unable to reproduce the results} from the \href{https://github.com/panostheo98/3MSBM}{3MSBM repository} (nor had success implementing the method ourselves) and so we instead report values from their paper directly. For the remaining baselines, we will report the results from the corresponding papers, denoted by $\text{OT-CFM}^*$ and $\text{OT-MFM}^*$.

As in prior work, we consider datasets of the form
\begin{align*}
    \{(t_j,\widehat\rho_{t_j})\}_{j=0}^J,
\end{align*}
where each $\widehat\rho_{t_j}$ is an empirical measure consisting of $n_{t_j}$ atoms in $\R^d$. For each dataset, we train on a subset of the available marginals and evaluate whether the generated trajectories match held-out marginals. Following the conventions of prior work, we use the $2$-Wasserstein metric as the evaluation metric for the low-dimensional datasets and the $1$-Wasserstein metric for the high-dimensional single-cell datasets. Unless otherwise stated, we average results over five trials.

\subsection{Low-dimensional datasets}
We first evaluate \texttt{AM} on two low-dimensional datasets widely used in the literature: ocean-current trajectories in the Gulf of Mexico (GoM), and Lotka--Volterra (LV) dynamics~\citep{shen2024multi}. For the GoM and LV datasets, $J=8$ (thus nine total marginals) and we train on the even-time marginals, holding out the odd times $(t_1,t_3,t_5,t_7)$.
At inference time, we sample initial positions from $\widehat\rho_{t_0}$, simulate trajectories forward in time, and compare the generated samples with both training and held-out marginals using the $2$-Wasserstein distance. In Figures~\ref{fig:traj_gom} and~\ref{fig:traj_lv}, we denote the held-out marginals by $t_j^*$.

\begin{table}[t]
\centering
\caption{Average training and holdout $W_2$ errors for GoM and LV datasets; lower is better. }
\label{tab:low-dimensional-w2-300-official}
\begin{tabular}{lcccc}
& \multicolumn{2}{c}{GoM} & \multicolumn{2}{c}{LV} \\
\cmidrule(lr){2-3}\cmidrule(lr){4-5}
Method & Holdout $W_2$ & Train $W_2$ & Holdout $W_2$ & Train $W_2$ \\
\midrule
3MSBM$^{*}$ & 0.135 & --- & 0.230 & --- \\
MMFM & $0.209 \pm 0.012$ & $0.104 \pm 0.015$ & $0.427 \pm 0.031$ & $0.400 \pm 0.052$ \\
\midrule
MMFM$_{\rm const}$ & $0.191 \pm 0.018$ & $0.125 \pm 0.019$ & $0.530 \pm 0.036$ & $0.423 \pm 0.071$ \\
MMFM$_{\mathrm{van}}$ & $0.298 \pm 0.084$ & $0.131 \pm 0.048$ & $1.116 \pm 0.539$ & $0.809 \pm 0.467$ \\
\textbf{AM (Ours)} & $0.163 \pm 0.010$ & $0.093 \pm 0.022$ & $0.376 \pm 0.035$ & $0.246 \pm 0.029$ \\
\end{tabular}
\end{table}
Table~\ref{tab:low-dimensional-w2-300-official} summarizes our results for the low-dimensional datasets. \texttt{AM} achieves the lowest holdout and training error on both GoM and LV compared to all MMFM variants, though it falls short compared to the reported results from 3MSBM.

Figures~\ref{fig:traj_gom} and~\ref{fig:traj_lv} show representative simulated trajectories for GoM and LV\@. In both datasets, MMFM$_{\rm const}$ has trajectories that are heavily concentrated around the mean of each marginal. As desired, the trajectories generated by MMFM$_{\rm van}$ appear to match the training marginals but otherwise appear to be unstable between them. In contrast, \texttt{AM} produces trajectories that more consistently match the marginals along the entire path.

\begin{figure}[t!]
    \centering
    \includegraphics[width=0.85\linewidth]{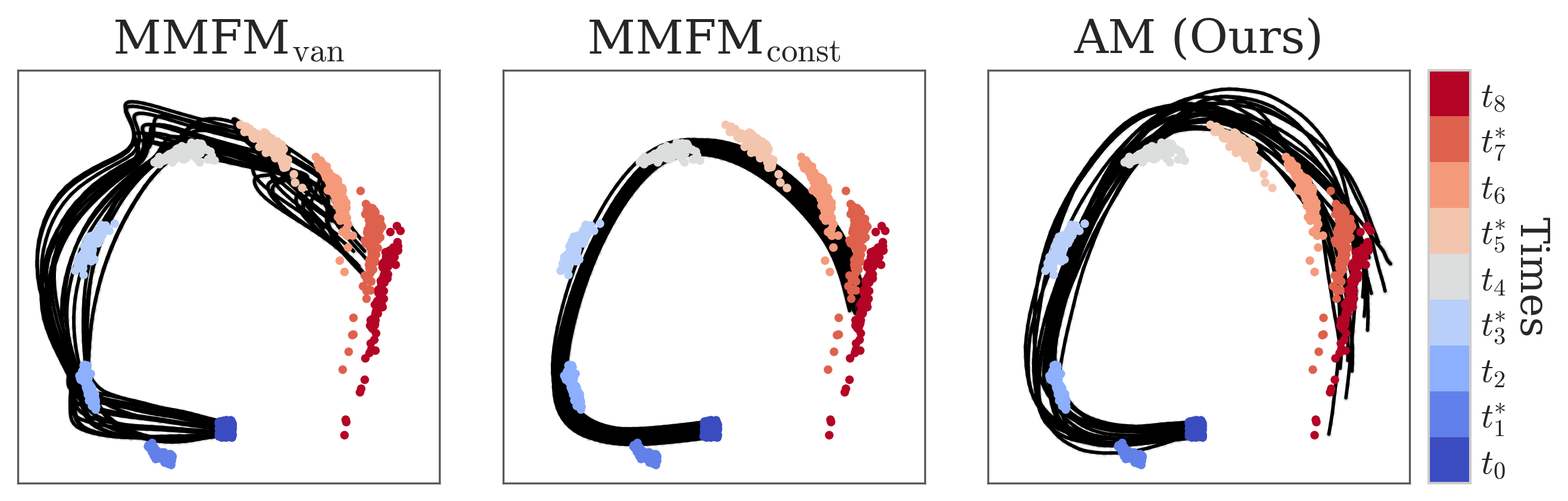}
    \caption{Visualization of trajectories generated for the GoM dataset.}
    \label{fig:traj_gom}
\end{figure}

\begin{figure}[h!]
    \centering
    \includegraphics[width=0.85\linewidth]{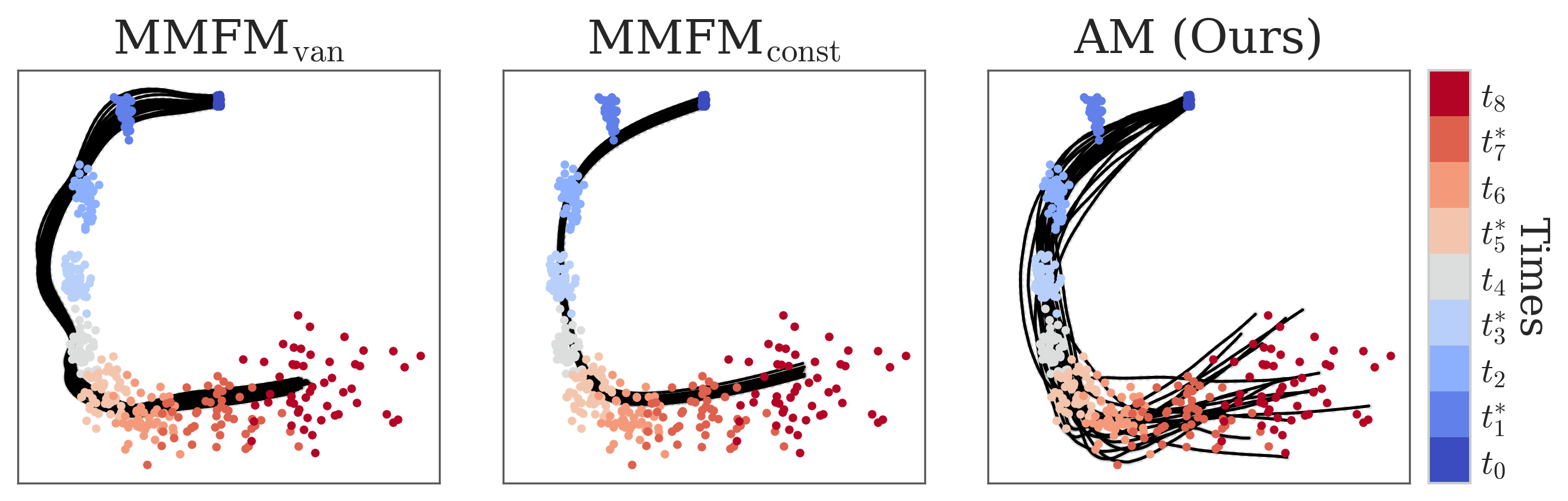}
    \caption{Visualization of trajectories generated for the LV dataset.}
    \label{fig:traj_lv}
\end{figure}

\subsection{High-dimensional datasets}
We next evaluate \texttt{AM} on higher-dimensional datasets that arise from single-cell biology. Following prior work, we use PCA representations of developing embryoid bodies (EB)~\citep{moon2019eb} and CITE-seq data (CITE)~\citep{burkhardtcite}, reducing the dimensionality to either $5$ or $50$. As in prior work, we use a leave-one-marginal-out (LOO) protocol, always keeping the initial and final marginals in the training set, and report the average held-out $W_1$ error. For EB5, there are 5 marginals and we leave out one of $t_1, t_2,$ or $t_3$, and average the resulting errors; for CITE5 and CITE50, we leave out either $t_1$ or $t_2$, and average the resulting errors.

\begin{table}[t]
\centering
\caption{Average LOO $W_1$ error on EB5, CITE5, and CITE50; lower is better.}
\label{tab:summary-high-dim}
\begin{tabular}{@{}lccc@{}}
\textbf{Method} & \textbf{EB5} & \textbf{CITE5} & \textbf{CITE50} \\
\midrule
OT-CFM$^{*}$ & \wentry{0.790}{0.068} & \wentry{0.882}{0.058} & \wentry{38.756}{0.398} \\
OT-MFM$^{*}$ & \wentry{0.713}{0.039} & \wentry{0.724}{0.070} & \wentry{36.394}{1.886} \\
\midrule
MMFM$_{\mathrm{const}}$ & \wentry{0.824}{0.089} & \wentry{0.542}{0.018} & \wentry{41.674}{0.399} \\
MMFM$_{\mathrm{van}}$ & \wentry{0.971}{0.143} & \wentry{1.074}{0.077} & \wentry{200.946}{19.987} \\
{AM} & \wentry{0.791}{0.074} & \wentry{0.634}{0.021} & \wentry{49.831}{ 1.809} \\
\end{tabular}
\end{table}

The high-dimensional results are summarized in Table~\ref{tab:summary-high-dim}. On EB5, \texttt{AM} improves over both MMFM variants and is comparable to OT-CFM, while OT-MFM reports a slightly lower error. On CITE5, \texttt{AM} performs better than most of its competitors, though ultimately MMFM$_{\rm const}$ returns the lowest error. The CITE50 setting is the most challenging: while \texttt{AM} is far more stable than MMFM$_{\rm van}$ and is somewhat competitive with MMFM$_{\rm const}$, all three of our implementations appear to lag behind the reported errors from the other flow-matching approaches. We compare generated trajectories of \texttt{AM} and MMFM in Appendices~\ref{app:eb} and~\ref{app:cite}.

\begin{figure}[t]
    \centering
    \includegraphics[width=0.95\linewidth]{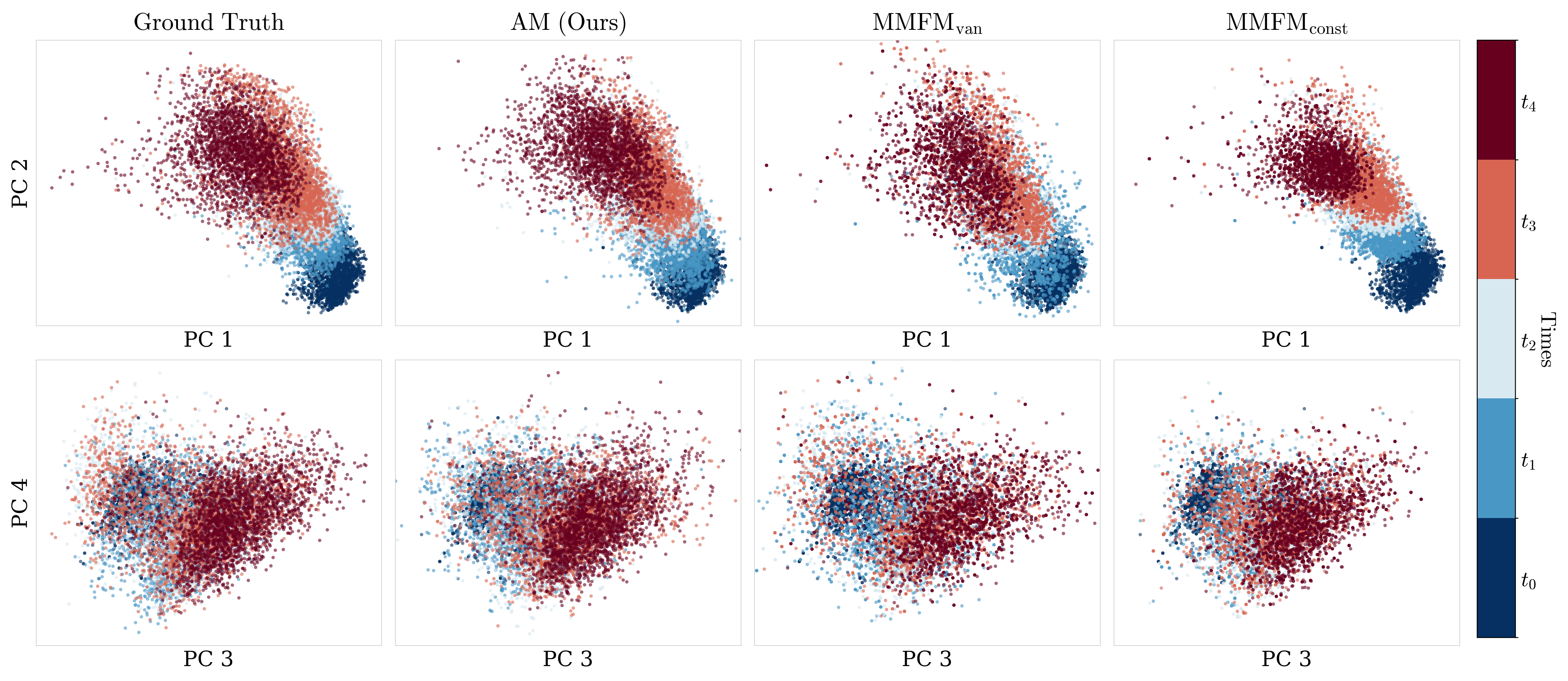}
    \caption{Visualization of generated marginals by \texttt{AM} and MMFM$_{\rm van}$ and MMFM$_{\rm const}$ on the EB5 dataset in the LOO setting ($t_1$ is left out).}
    \label{fig:eb5-t1-pca}
\end{figure}

\subsection{Related work}\label{sec:related_work}

\subsubsection*{Multi-sample flow matching and related algorithms}
Flow matching \citep{lipman2022flow,albergo2022building,liu2022flow} is a simulation-free approach for generating probability paths between two marginals.
Multi-marginal flow matching (MMFM) is a simulation-free approach proposed by \citet{rohbeck2025modeling}, which uses ideas reminiscent of other works that incorporate data-dependent couplings  \citep{tong2023improving,pooladian2023multisample,albergo2023stochastic} combined with using cubic splines.
Similar to the proposal by \citet{chewi2021fast}, they first compute pairwise optimal transport matchings between the marginals. For datasets of moderate size, these matchings can be obtained using a standard optimal transport package \citep{flamary2021pot} where, writing $n = \max_j n_{t_j}$ as the maximum number of points per marginal, the overall preprocessing cost is ${\cal O}(Jn^3)$ and the storage cost is therefore ${\cal O}(Jn^2)$. (Note that this preprocessing step can be expensive for $n \gtrsim 10^4$.) Then, given such a pairwise aligned coupling, they sample data $X_{0:J} \defeq (X_0,\ldots,X_J) \sim {\rm pairwise}(\rho_0,\ldots,\rho_J)$, and approximate the following velocity field\looseness-1
\begin{align*}
    \bbE[\partial_t \,{\rm spline}(t; X_{0:J}) + \sigma_t' \xi\,|\, {\rm spline}(t;X_{0:J}) + \sigma_t \xi = z]\,,
\end{align*}
where ${\rm spline}(t; X_{0:J})$ is a cubic spline between the drawn knots, $\sigma_t \geq 0$ is the noise coefficient, and $\xi \sim \cN(0,I)$. Upon learning the resulting vector field, they simulate the resulting ordinary differential equation to obtain the trajectories. \citet{rohbeck2025modeling} were also interested in learning conditional velocity fields via classifier-free guidance, which we do not pursue here.

The code from \citet{rohbeck2025modeling} mostly uses a constant variance, i.e., $\sigma_t' = 0$, this is what we call ${\rm MMFM}_{\rm const}$. They also consider $\sigma_t = 4(t_{j+1}-t)(t - t_j)/(t_{j+1}-t_j)^2$ which vanishes at the marginal endpoints; we denote this by ${\rm MMFM}_{\rm van}$. As we mentioned before, the former induces trajectories where the positional marginals are Gaussian-smoothed versions of the true marginal knots, whereas the latter matches the marginals exactly.

Optimal transport conditional flow matching (OT-CFM) uses minibatch optimal transport couplings between the marginals when fitting a velocity field \citep{tong2023improving,pooladian2023multisample}. \cite{kapusniak2024metric} proposed OT-Metric Flow Matching (OT-MFM), which incorporates a Riemannian metric from the data in addition to learning a vector field. The Riemannian metrics take the form of kernel density estimators (e.g., radial basis functions).

\subsubsection*{Multi-marginal Schr\"odinger bridge methods}
 Many neural-network-based algorithms for solving the trajectory inference problem are based on multi-marginal Schr\"odinger bridges. For instance, multi-marginal stochastic flow matching (MMSFM) \citep{lee2025multi} is a variant of MMFM that incorporates some ideas from \citet{shi2024diffusion}, though still uses learning pairwise computed OT couplings (sometimes triplets). Schr\"odinger bridge with iterative reference refinement (SBIRR) \citep{shen2024multi} requires some degree of trajectory simulation as part of the algorithm and has non-smooth positional trajectories as they use standard SDEs, not lifted ones, for generating trajectories.

The work of \citet{theodoropoulos2025momentum} also passes to phase space (which they refer to as ``momentum''), where their goal is to learn a Markovian acceleration field. Apart from this similarity, our proposed method is quite distinct from theirs. They regress on a different conditional acceleration field, one which depends on \emph{several} future positions in a non-trivial way. Their proposed conditional acceleration field is defined as
\begin{align*}
    &a_t^{\rm{3MSBM}}((x_t,v_t)\,|\,\{\bm x_{j+1} \, : \, t_{j+1} \geq t\}) = C_1^j(t)(x_t - \bm x_{j+1}) + C_2^j(t) v_t + C_3^j(t)\sum_{\ell=j+1}^J \lambda_\ell \bm x_\ell\,,
\end{align*}
for $t \in [t_j, t_{j+1})$, where $\{\bm x_j\}_{j=0}^J$ denotes the sampled knots, $\{\{C_k^j(t)\}_{k=1}^3\}_{j=0}^J$ are time-varying coefficients corresponding to each segment, and $\{\lambda_j\}_{j=0}^J$ are also computed. Note that for any choice of $J$, the time-varying coefficients need to be computed manually, and are not as explicit as ours (recall \eqref{eq:a_jt}). Additionally, their algorithm ultimately requires simulating trajectories (forward and backward in time) using their conditional acceleration field for each update step. While they do not require function calls to their parametric function approximator, the simulation of SDEs per iteration during training is nevertheless computationally undesirable, and ultimately not simulation-free.

\section{Conclusion}
We propose Acceleration Matching as a principled, scalable algorithm for trajectory inference. To learn smooth trajectories, we lift the learning problem to phase space, with the goal of learning an appropriate Markovianized acceleration field. This perspective yields a fully explicit, regression objective for learning dynamics with multiple marginal constraints that, unlike existing algorithms, requires minimal preprocessing, incurs little per-iteration overhead, and is entirely simulation-free during training. Several natural directions remain beyond the scope of this work, such as investigating the role of the reference process, improving overall algorithmic complexity, extending our framework to instance-specific Riemannian manifolds~\citep{chen2023flow,yim2023se,haviv2024wasserstein}, quantifying discretization guarantees in the spirit of~\citet{chen2022sampling} and~\citet{conforti2025kl}, and applying our method to more computationally intensive settings where either $J$ or $d$ (or both) are large.

\section*{Acknowledgments}
AAP is grateful for financial support from the Institute for Foundations of Data Science at Yale University and to Ricardo Baptista for helpful comments on an earlier version of this draft.

\bibliography{main}

\newpage
\appendix
\section{Supplement of Section~\ref{sec:main}}
\label{sec:supp_section_main}
We denote by $(X_t)_{t\in[0,1]}$ the canonical process, i.e., $X_t(h) = h_t \in \R^d$ for any $h \in {\cal C}^d$ and any $t \in [0, 1]$. For any $t$, $\msf Q_t$ denotes the $t$-marginal $(X_t)_\sharp \msf Q$ for $\msf Q \in \mathcal{P}({\cal C}^d)$. By a slight abuse of notation, we denote by $(X_t,V_t)_{t\in[0,1]}$ the canonical process on the phase space $\rset^{2d}$, i.e., $X_t(h) = \mathrm{proj}_x(h_t)$ and $V_t(h) = \mathrm{proj}_v(h_t)$ for any $h \in {\cal C}^{2d}$ and any $t \in [0, 1]$. Here, $\mathrm{proj}_x$ and $\mathrm{proj}_v$ denote the projections onto the first and last $d$ components, respectively.

We also recall the dynamics associated with kinetic underdamped Brownian motion, denoted $\msf R = \msf R^{\gamma,\eps}$
\begin{align}\label{eq:ref_proc}
    \begin{cases}
        \dd X_t = V_t\dd t \\
        \dd V_t = -\gamma V_t \dd t + \sqrt{\varepsilon} \dd B_t\,.
    \end{cases}
\end{align}
For additional shorthand, let $Z_t \defeq (X_t,V_t)$ for $t \in [0,1]$.

We now provide a formal definition of $\msf R_{|\bm t}$; this is a Markov kernel on $(\rset^{d})^{J+1} \times \mathcal{B}(\mathcal{C}^{2d})$ satisfying for any measurable sets $A \in \mathcal{B}(\mathcal{C}^{2d})$ and $B \in \mathcal{B}((\rset^d)^{J+1})$, $\msf R(A\cap \{X_{\bm t} \in B\}) = \PE[\1_{B}(X_{\bm t}) \msf R_{|\bm t}(X_{\bm t},A)]$.

\subsection{Proof of Lemma~\ref{lem:condi_v_t}}
\label{sec:proof-lemma-refl}
\begin{lemma}\label{lem:fund}
  Conditionally on $(X_0,V_0)$, the reference process \eqref{eq:ref_proc} is a Gaussian process
  with mean and covariance functions given by, for $s,t \in \ccint{0,1}$ and $s \leq t$,
  \begin{equation*}
    m_Z(t) = \begin{pmatrix}
X_0 + V_0  a_t \\[6pt]
V_0 b_t
\end{pmatrix}  =  A_t Z_0 \,,
    \end{equation*}
where $a_t = \gamma^{-1}(1-\rme^{-\gamma t})$, $b_t = \rme^{-\gamma t}$, and
\begin{align*}
    {A}_t &= \begin{pmatrix}
        1 & a_t \\
        0 & b_t
      \end{pmatrix} \otimes \Idd \,.
\end{align*}
Moreover, it holds that for $s\leq t$, $\Gamma_Z(s,t) \defeq {\rm Cov}(Z_s,Z_t\,|\,Z_0)$ is given by
\begin{align*}
     \Gamma_Z(s,t) &=
     \begin{pmatrix}
\mathrm{Var}(X_s) + a_{t-s}\mathrm{Cov}(X_s,V_s)  & b_{t-s} \mathrm{Cov}(X_s,V_s)  \\
\mathrm{Cov}(X_s,V_s) + a_{t-s}  \mathrm{Var}(V_s)   & b_{t-s}  \mathrm{Var}(V_s)
     \end{pmatrix}
\otimes \Idd \,,
\end{align*}
where
\begin{align*}
    \mathrm{Var}(V_s) &= (\eps /(2 \gamma))( 1  - \rme^{-2 \gamma s} )\,, \\
    \mathrm{Var}(X_s)& =   (\varepsilon/\gamma^2) \left[ s - 2 a_s + (1/(2\gamma))(1-\rme^{-2\gamma s})\right]\,, \\
  \mathrm{Cov}(X_s,V_s) &  =  (\eps/2)a_s^2\,.
\end{align*}
In particular, if $\gamma = 0$, then the above simplifies to
  \begin{align*}
    m_Z(t) = \begin{pmatrix}
        X_0 + V_0 t \\
        V_0
    \end{pmatrix}
    = A_t Z_0, \qquad A_t =
    \begin{pmatrix}
        1 & t \\
        0 & 1
    \end{pmatrix} \otimes \Idd,
\end{align*}
and
\begin{align*}
    \Gamma_Z(s,t) =
\varepsilon     \begin{pmatrix}
  s^2(3t-s)/6 & s^2/2\\
  s^2/2 +s(t-s) & s
    \end{pmatrix}
    \otimes \Idd.
\end{align*}
\end{lemma}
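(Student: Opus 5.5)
The plan is to solve the linear SDE \eqref{eq:ref_proc} explicitly by variation of constants. This exhibits $Z_t$, conditionally on $Z_0$, as an affine function of $Z_0$ plus Wiener integrals of deterministic kernels, which gives the Gaussian property. The mean then follows directly. For the cross-covariance I will use the Markov structure of the solution rather than computing double integrals.

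\textbf{Step 1 (explicit solution and Gaussianity).} The velocity equation is an Ornstein--Uhlenbeck equation, so
\[
V_t = b_t V_0 + \sqrt{\eps}\int_0^t b_{t-r}\,\dd B_r .
\]
Integrating $X_t = X_0 + \int_0^t V_u\,\dd u$ and exchanging the order of integration (stochastic Fubini, harmless here since the integrand is deterministic and bounded) gives
\[
X_t = X_0 + a_t V_0 + \sqrt{\eps}\int_0^t a_{t-r}\,\dd B_r ,
\]
because $\int_r^t b_{u-r}\,\dd u = a_{t-r}$. Conditionally on $Z_0$, every finite family $(Z_{t_1},\dots,Z_{t_k})$ is a linear image of Wiener integrals of deterministic functions against $B$. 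Hence the process is Gaussian, and its conditional mean is $A_t Z_0$. The coordinates decouple across the $d$ dimensions, which accounts for the $\otimes \Idd$ factor; from here on I work with $d=1$.

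\textbf{Step 2 (cross-covariance via the Markov property).} For $s\le t$, applying the same formula started at time $s$ gives
\[
Z_t = A_{t-s}Z_s + \xi_{s,t},
\]
where $\xi_{s,t}$ is a Wiener integral over $[s,t]$ and is therefore independent of $Z_s$ given $Z_0$. Consequently
\[
\Gamma_Z(s,t)=\mathrm{Cov}(Z_s,Z_s)\,A_{t-s}^\top ,
\qquad
A_{t-s}^\top=\begin{pmatrix}1&0\\ a_{t-s}&b_{t-s}\end{pmatrix}.
\]
Multiplying the symmetric $2\times2$ matrix of $\mathrm{Var}(X_s)$, $\mathrm{Cov}(X_s,V_s)$, $\mathrm{Var}(V_s)$ by this matrix reproduces exactly the stated block layout.

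\textbf{Step 3 (equal-time moments).} By the Itô isometry,
\[
\mathrm{Var}(V_s)=\eps\int_0^s b_u^2\,\dd u,\qquad
\mathrm{Cov}(X_s,V_s)=\eps\int_0^s a_u b_u\,\dd u,\qquad
\mathrm{Var}(X_s)=\eps\int_0^s a_u^2\,\dd u .
\]
The first evaluates to $(\eps/(2\gamma))(1-\rme^{-2\gamma s})$. The second equals $(\eps/2)a_s^2$, since $a_u' = b_u$ and so $\frac{\dd}{\dd u}\bigl(a_u^2/2\bigr)=a_u b_u$. The third follows by expanding $a_u^2=\gamma^{-2}\bigl(1-2\rme^{-\gamma u}+\rme^{-2\gamma u}\bigr)$ and integrating term by term.

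\textbf{Step 4 (the case $\gamma=0$).} Here $a_t=t$ and $b_t=1$, so either by taking limits or by direct computation $\mathrm{Var}(V_s)=\eps s$, $\mathrm{Cov}(X_s,V_s)=\eps s^2/2$ and $\mathrm{Var}(X_s)=\eps s^3/3$. Substituting into the block formula gives the top-left entry $\eps\bigl(s^3/3+(t-s)s^2/2\bigr)=\eps s^2(3t-s)/6$, and the other three entries follow immediately.

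None of these steps is deep. The places where care is needed are the justification of the Fubini exchange in Step 1 and the bookkeeping of the transpose and ordering ($s\le t$) in Step 2, which must be arranged to land on the asymmetric matrix exactly as the statement writes it.
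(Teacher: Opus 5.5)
Your proposal is correct and follows essentially the same route as the paper's proof. Both solve the linear SDE explicitly (the velocity as an Ornstein--Uhlenbeck process, the position via exchanging integrals to get the kernel $a_{t-r}$), obtain the equal-time moments from the It\^o isometry, and get the cross-covariance from $Z_t = A_{t-s}Z_s + \xi_{s,t}$ with $\xi_{s,t}$ independent of $Z_s$. The differences are cosmetic: you write the paper's four entrywise covariance computations as the single identity $\Gamma_Z(s,t)=\mathrm{Cov}(Z_s)A_{t-s}^\top$, and you check the $\gamma=0$ entries explicitly where the paper only states them.
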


\begin{proof}
First, we only show the result for $d=1$ since given $(X_0,V_0)$ all components are independent and therefore the result tensorizes. Note that for $s,t \in \ccint{0,1}$, we can readily solve the SDE for the velocity term to yield
\begin{align*}
    V_t = \rme^{-\gamma(t-s)}V_s + \eta_{s,t}^V \defeq \rme^{-\gamma(t-s)}V_s + \sqrt{\eps}\int_s^t \rme^{-\gamma(t-u)}\dd B_u\,,
\end{align*}
and for $s=0$, we have that $V_t = \rme^{-\gamma t}V_0 + \sqrt{\eps}\int_0^t \rme^{-\gamma(t-u)} \dd B_u$. Solving for the position variable then yields
\begin{align*}
    X_t = X_s + \int_s^t V_u \dd u  &= X_s + a_{t-s}V_s + \eta_{s,t}^X\\
    &\defeq X_s + a_{t-s}V_s+ \sqrt{\varepsilon} \int_s^t \int_{s}^{u} \rme^{-\gamma(u - r)} \dd B_{r} \dd u\,,
\end{align*}
where in the second step we used that $\int_s^t \rme^{-\gamma(u-s)}\dd u = a_{t-s}$. Therefore, as a solution of a linear SDE, $(X_t,V_t)_{t\in \ccint{0,1}}$ is indeed a Gaussian process given initial datum $(X_0,V_0)$. We now proceed and compute the remaining explicit expressions, which make heavy use of the It\^o isometry and the Markov property of the Brownian motion.

To start, we compute the double integral as
\begin{align*}
  \int_s^t \int_s^u \rme^{-\gamma(u-r)} \dd B_r \dd u = \int_s^t a_{t-r}\dd B_r\,,
\end{align*}
which further simplifies the expression for $X_t$, which will be useful later.

We readily compute the means of the Gaussian process as
\begin{align*}
    \bbE[V_t\,|\,(X_0,V_0)] = \rme^{-\gamma t} V_0\,, \quad \bbE[X_t\,|\,(X_0,V_0)] = X_0 + a_tV_0\,,
\end{align*}
which implies the representation $m_Z(t) = A_t Z_0$ and the accompanying matrix $A_t$.

Conditionally on $(X_0,V_0)$, define the centered variables
\begin{align*}
    \overline{V}_s = \sqrt{\eps}\int_0^s \rme^{-\gamma(s-u)}\dd B_u\,, \quad \overline{X}_s = \sqrt{\eps}\int_0^s a_{s-u}\dd B_u\,,
\end{align*}
and then we readily compute
\begin{align*}
    &{\rm Var}(V_s) = \bb E\Bigl[\overline{V}_s^2\Bigr] = \eps \int_0^s \rme^{-2\gamma(s-u)}\dd u = \frac{\eps}{2\gamma}(1-\rme^{-2\gamma s})\,,
\end{align*}
and similarly
\begin{align*}
{\rm Var}(X_s) = \bb E\Bigl[\overline{X}_s^2\Bigr] = \eps \int_0^s a_{s-u}^2 \dd u  &= \frac{\eps}{\gamma^2}\int_0^s (1 - 2\rme^{-\gamma r} + \rme^{-2\gamma r}) \dd r \\
&= \frac{\eps}{\gamma^2}\bigl(s - 2a_s + (2\gamma)^{-1}(1-\rme^{-2\gamma s}) \bigr)\,,
\end{align*}
and also
\begin{align*}
    {\rm Cov}(X_s,V_s) = \eps \int_0^s \rme^{-\gamma(s-u)} a_{s-u} \dd u = \frac{\eps}{2}a_s^2\,.
\end{align*}
We now compute the covariance between times $s \leq t$. Note that $V_t$ can be written in terms of $V_s$ and an additional Brownian motion term that is independent of $V_s$ (and similarly for $X_s$). Due to this independence, we can compute the covariance between times as
\begin{align*}
    {\rm Cov}(V_s,V_t) = {\rm Cov}(V_s, \rme^{-\gamma (t-s)}V_s + \eta_{s,t}^V) = {\rm Cov}(V_s, \rme^{-\gamma (t-s)}V_s) = \rme^{-\gamma (t-s)}{\rm Var}(V_s)\,,
\end{align*}
and similarly
\begin{align*}
    & {\rm Cov}(X_s,V_t) = {\rm Cov}(X_s, \rme^{-\gamma (t-s)}V_s) = \rme^{-\gamma(t-s)}{\rm Cov}(X_s,V_s)\,, \\
    &{\rm Cov}(V_s,X_t) = {\rm Cov}(V_s, X_s + a_{t-s}V_s) = {\rm Cov}(X_s,V_s) + a_{t-s}{\rm Var}(V_s)\,, \\
    &{\rm Cov}(X_s,X_t) = {\rm Cov}(X_s, X_s + a_{t-s}V_s) = {\rm Var}(X_s) + a_{t-s}{\rm Cov}(X_s,V_s)\,,
\end{align*}
which concludes the proof.
\end{proof}

Recall our notation conventions $0=t_0<t_1<\cdots<t_J= 1$ with $\bm{t} = (t_0,\ldots,t_J)$. Additionally define $\bm a, \bm b \in \R^{J+1}$ as
\begin{align*}
    \bm a \defeq a_{\bm t} \defeq (a_{t_0}, \ldots, a_{t_J}) \,, \quad
    \bm b \defeq b_{\bm t} \defeq (\rme^{-\gamma t_0}, \ldots, \rme^{-\gamma t_J})\,,
\end{align*}
as well as $\bm q \defeq (\bm a,\bm b) \in \R^{2(J+1)}$, and recall 
\begin{align*}
V_{\bm{t}}
=
\begin{pmatrix}
V_0&
V_{t_1}&
\hdots&
V_{t_J}
\end{pmatrix}^{\top}
\in\mathbb R^{(J+1)d}\eqsp,
\qquad
X_{\bm{t}}
=
\begin{pmatrix}
X_{0}&
X_{t_1}&
\hdots&
X_{t_J}
\end{pmatrix}^{\top}
\in\mathbb R^{(J+1)d}\eqsp.
\end{align*}

\begin{lemma}\label{lem:condi_v_t_detailed}
Let $X_0 = x_0 \in \R^d$ be fixed, and assume that
\begin{align*}
    V_0 \mid X_0 = x_0 \sim {\gauss}(m_0(x_0), \Sigma_0(x_0))\,.
\end{align*}
Then conditionally on $X_0 = x_0$, $(X_{\bm t}, V_{\bm t})$ is (jointly) Gaussian with mean $\bm m = (m_{\bm X}, m_{\bm V})$, where the components are
\begin{align}
    m_{\bm X} \defeq \bm 1_{J+1} \otimes x_0 + \bm a \otimes m_0(x_0)\,,\quad m_{\bm V} = \bm b \otimes m_0(x_0)\,,
\end{align}
and covariance $\bm{\Sigma} \in \R^{2(J+1)d  \times 2(J+1)d}$ given by
\begin{align*}
    \bm{\Sigma} \defeq
    \begin{pmatrix}
\Sigma_{\bm X} &\Sigma_{\bm{XV}}\\
\Sigma_{\bm{VX}}&\Sigma_{\bm{V}}
\end{pmatrix} \defeq
({\bm{q}} {\bm{q}}^{\top} \otimes \Sigma_0(x_0))  + \begin{pmatrix}
K_{\bm X} &K_{\bm{VX}}^\top\\
K_{\bm{VX}} &K_{\bm V}
\end{pmatrix} \otimes I_d\,,
\end{align*}
where for $i,k \in \{0,\ldots,J\}$ we define
\begin{align*}
  [K_{\bm X}]_{ik} & = \mathrm{Var}(X_{t_i\wedge t_k}) + a_{|t_i-t_k|} \mathrm{Cov}(X_{t_i\wedge t_k}, V_{t_i\wedge t_k})\,, \\
[K_{\bm V}]_{ik}
  &= \rme^{-\gamma |t_i-t_k|} \mathrm{Var}(V_{t_i\wedge t_k})\,, \\
[K_{\bm{VX}}]_{ik}
  &    =              \begin{cases}
                    \mathrm{Cov}(X_{t_i},V_{t_i})
+
a_{t_k-t_i}\mathrm{Var}(V_{t_i}) & i \leq k\,, \\
\rme^{-\gamma(t_i-t_k)} \mathrm{Cov}(X_{t_k},V_{t_k}) & i >k \,.
                  \end{cases}
\end{align*}
Moreover, $V_{\bm t} \mid X_{\bm t}$ is Gaussian with mean and covariance given by
\begin{align}\label{eq:mean_cov_cond}
\begin{split}
m_{\bm V\mid \bm X} &= {m}_{\bm V} +
\Sigma_{\bm{VX}}\Sigma_{\bm X}^{\dagger}
(X_{\bm{t}}- m_{\bm X})\,, \\
\Sigma_{\bm V\mid \bm X}
&=
\Sigma_{\bm{V}}
-
\Sigma_{\bm{VX}}\Sigma_{\bm{X}}^\dagger\Sigma_{\bm{XV}}\,,
\end{split}
\end{align}
where $\Sigma_{\bm{X}}^{\dagger}$ denotes the Moore--Penrose pseudoinverse of $\Sigma_{\bm{X}}$.
\end{lemma}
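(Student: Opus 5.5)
The plan is to write $(X_{\bm t},V_{\bm t})$, conditionally on $X_0=x_0$, as an affine image of two independent Gaussian vectors and then compute its first two moments directly. By Lemma~\ref{lem:fund} (from its proof, with time origin $0$), for every $t\in[0,1]$,
\begin{equation*}
Z_t = A_t Z_0 + \bar Z_t\,,\qquad \bar Z_t \defeq (\overline{X}_t,\overline{V}_t)\,,
\end{equation*}
where $(\bar Z_t)_t$ is a centered Gaussian process driven only by the Brownian motion. It is therefore independent of $Z_0$, and its covariance is the one given in Lemma~\ref{lem:fund}. Conditionally on $X_0=x_0$, the only remaining randomness is $V_0\sim\gauss(m_0(x_0),\Sigma_0(x_0))$, which is independent of $(\bar Z_{t_i})_{i}$. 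Since $X_{t_i} = x_0 + a_{t_i}V_0 + \overline{X}_{t_i}$ and $V_{t_i} = b_{t_i}V_0 + \overline{V}_{t_i}$, stacking over $i$ gives
\begin{equation*}
X_{\bm t} = \bm 1_{J+1}\otimes x_0 + (\bm a\otimes I_d)V_0 + \overline{X}_{\bm t}\,,\qquad V_{\bm t} = (\bm b\otimes I_d)V_0 + \overline{V}_{\bm t}\,.
\end{equation*}
This is an affine map of the jointly Gaussian, independent pair $(V_0,\bar Z_{\bm t})$. Hence $(X_{\bm t},V_{\bm t})$ is jointly Gaussian given $X_0=x_0$.

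\textbf{Mean.} Taking expectations immediately gives $m_{\bm X}=\bm 1_{J+1}\otimes x_0+\bm a\otimes m_0(x_0)$ and $m_{\bm V}=\bm b\otimes m_0(x_0)$.

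\textbf{Covariance.} Because the two sources of randomness are independent, the covariance splits into two terms.
\begin{itemize}
\item The $V_0$ part is $(\bm q\otimes I_d)\Sigma_0(\bm q\otimes I_d)^\top=\bm q\bm q^\top\otimes\Sigma_0(x_0)$, using the mixed-product property of the Kronecker product.
\item The noise part is the block matrix of $\mathrm{Cov}(\bar Z_{t_i},\bar Z_{t_k})$. For $t_i\le t_k$ it is read off from $\Gamma_Z(t_i,t_k)$ in Lemma~\ref{lem:fund}, and for $t_i>t_k$ it is obtained by transposition.
\end{itemize}
For the noise blocks:
\begin{itemize}
\item The $XX$ entry is $\mathrm{Var}(X_{s})+a_{|t_i-t_k|}\mathrm{Cov}(X_s,V_s)$ with $s=t_i\wedge t_k$, which is symmetric in $(i,k)$.
\item The $VV$ entry is $b_{|t_i-t_k|}\mathrm{Var}(V_s)$.
\item The mixed entry $[K_{\bm{VX}}]_{ik}=\mathrm{Cov}(V_{t_i},X_{t_k})$ needs case bookkeeping. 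If $i\le k$, the lower-left entry of $\Gamma_Z(t_i,t_k)$ gives $\mathrm{Cov}(X_{t_i},V_{t_i})+a_{t_k-t_i}\mathrm{Var}(V_{t_i})$. If $i>k$, the upper-right entry of $\Gamma_Z(t_k,t_i)$ gives $\rme^{-\gamma(t_i-t_k)}\mathrm{Cov}(X_{t_k},V_{t_k})$.
\end{itemize}
These entries all come from the zero-initial-condition quantities of Lemma~\ref{lem:fund}, so they carry no $V_0$ contribution. Each noise block is diagonal in the spatial coordinates, which is why it appears tensored with $I_d$.

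\textbf{Conditional law.} The statement of $V_{\bm t}\mid X_{\bm t}$ is the standard Gaussian conditioning formula. The possibly singular case, notably the degenerate row for $t_0=0$ where $X_{t_0}=x_0$ is deterministic, is handled with the pseudoinverse. I would justify it by checking that
\begin{equation*}
W\defeq V_{\bm t}-m_{\bm V}-\Sigma_{\bm{VX}}\Sigma_{\bm X}^\dagger(X_{\bm t}-m_{\bm X})
\end{equation*}
is uncorrelated with $X_{\bm t}$, hence independent of it by joint Gaussianity. This needs $\Sigma_{\bm{VX}}\Sigma_{\bm X}^\dagger\Sigma_{\bm X}=\Sigma_{\bm{VX}}$. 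That identity holds because the range of $\Sigma_{\bm{XV}}$ lies in the range of $\Sigma_{\bm X}$, which follows from positive semidefiniteness of $\bm\Sigma$. The conditional covariance then equals $\mathrm{Cov}(W)$, which simplifies to $\Sigma_{\bm V}-\Sigma_{\bm{VX}}\Sigma_{\bm X}^\dagger\Sigma_{\bm{XV}}$.

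The main obstacle I expect is not conceptual but index and transpose bookkeeping in $K_{\bm{VX}}$, together with justifying the pseudoinverse in the degenerate directions; the range-inclusion argument above is what I would use for the latter.
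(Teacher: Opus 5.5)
Your proposal is correct and follows essentially the same route as the paper. It splits $(X_{\bm t},V_{\bm t})$ into the $V_0$-part $(\bm q\otimes I_d)V_0$, which gives $\bm q\bm q^\top\otimes\Sigma_0(x_0)$, plus independent Brownian noise whose blocks are read off from $\Gamma_Z$ in Lemma~\ref{lem:fund}, and then applies Gaussian conditioning; your range-inclusion argument ($\Sigma_{\bm{VX}}\Sigma_{\bm X}^\dagger\Sigma_{\bm X}=\Sigma_{\bm{VX}}$) correctly fills in the pseudoinverse conditioning step that the paper calls ``standard computations that we omit.''
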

\begin{proof}
By Lemma~\ref{lem:fund}, given $(X_0,V_0)$, the vector $( X_{\bm{t}},V_{\bm{t}})$ is jointly
Gaussian, so all that remains is to compute the mean and covariance.

Conditional on $X_0 = x_0$, we have by the representations in Lemma~\ref{lem:fund} that
\begin{align}\label{eq:lemma_eq}
    X_{t_i} = x_0 + a_{t_i} V_0 + \eta_i^X\,, \quad V_{t_i} = b_{t_i} V_0 + \eta_i^V\,,
\end{align}
where we recall that we defined $b_t = \rme^{-\gamma t}$. We further write the random variable $V_0\mid X_0 = x_0$ as $V_0 = m_0(x_0) + \xi_0$ for $\xi_0 \sim {\gauss}(0,\Sigma_0(x_0))$. Taking expectations leads to the definitions of $m_{\bm X}$ and $m_{\bm V}$ since $\xi_0$, $\eta_i^X$, and $\eta_i^V$ have mean zero. Then rewriting \eqref{eq:lemma_eq}, we have
\begin{align*}
    (X_{\bm t}, V_{\bm t}) = (m_{\bm X}, m_{\bm V}) + (\bm q \otimes I_d)\xi_0 + (\eta_{\bm t}^X, \eta_{\bm t}^V)\,,
\end{align*}
where, to compute the covariance, we separate the two sources of noise (which are independent from one another). We directly compute that
\begin{align*}
    {\rm Cov}((\bm q \otimes I_d)\xi_0) = (\bm q \otimes I_d){\rm Cov}(\xi_0)(\bm q \otimes I_d)^\top = (\bm q \bm q^\top)\otimes \Sigma_0(x_0)\,,
\end{align*}
and for ${\rm Cov}(\eta_{\bm t}^X, \eta_{\bm t}^V)$, we appeal to the existing formulae in Lemma~\ref{lem:fund}. The remaining conditional Gaussian formulas follow from standard computations that we omit.
\end{proof}

We end this section by computing the pseudoinverse of $\Sigma_{\bm{X}}$.
\begin{lemma}
  \label{lem:pseudo_inverse}
Let $\gamma \geq 0, \eps > 0$ and consider the reference process $\mathsf R = \mathsf R^{\gamma,\eps}$, and recall the notation and setting of Lemma~\ref{lem:condi_v_t_detailed}. Write $L_{\bm X} \in \R^{J\times J}$ such that $[L_{\bm X}]_{ik} = [K_{\bm X}]_{ik}$ for $i,k \in \{1,\ldots,J\}$. Then $L_{\bm X}$ is positive definite, and it holds that
\begin{align}\label{eq:cov_pinv}
    (\Sigma_{\bm X})^\dagger = \begin{pmatrix}
        0 & 0 \\
        0 & \Theta^{-1}
    \end{pmatrix}\,,
\end{align}
where, for $\tilde {\bm a} = (a_{t_1}, \ldots, a_{t_J}) \in \R^J$, $\Theta$ is defined as
\begin{align*}
    \Theta = \tilde{\bm a} \tilde{\bm a}^\top \otimes \Sigma_0(x_0) + L_{\bm X}\otimes I_d\,,
\end{align*}
and its inverse is given by
\begin{align}\label{eq:theta_t_inv}
    \Theta^{-1} = L_{\bm X}^{-1} \otimes I_d - (L_{\bm X}^{-1} \tilde{\bm a}\tilde{\bm a}^\top L_{\bm X}^{-1})\otimes [\Sigma_0(x_0)(I_d + c_\star \Sigma_0(x_0))^{-1}]
\end{align}
for  $c_\star = \langle \tilde{\bm a}, L_{\bm X}^{-1} \tilde{\bm a}\rangle$. In particular, if $\Sigma_0(x_0)$ is invertible, then
\begin{align*}
    \Theta^{-1} = L_{\bm X}^{-1} \otimes I_d - (L_{\bm X}^{-1} \tilde{\bm a}\tilde{\bm a}^\top L_{\bm X}^{-1})\otimes (\Sigma_0(x_0)^{-1} + c_\star I_d)^{-1}\,.
\end{align*}
\end{lemma}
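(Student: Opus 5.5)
Since $t_0 = 0$, the first step is to examine the structure of $\Sigma_{\bm X}$ from Lemma~\ref{lem:condi_v_t_detailed}. Conditionally on $X_0 = x_0$, the first block component is deterministic: $a_{t_0}=a_0=0$, and $\mathrm{Var}(X_0)=\mathrm{Cov}(X_0,V_0)=0$ in Lemma~\ref{lem:fund}. Hence the first block row and the first block column of both $\bm q_{\bm a}\bm q_{\bm a}^\top\otimes\Sigma_0(x_0)$ and $K_{\bm X}\otimes I_d$ vanish. Indeed, $[K_{\bm X}]_{0k} = \mathrm{Var}(X_0) + a_{t_k}\mathrm{Cov}(X_0,V_0) = 0$. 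Therefore
\[
\Sigma_{\bm X} = \begin{pmatrix} 0 & 0\\ 0 & \Theta\end{pmatrix},
\]
with $\Theta$ exactly as in the statement. Once we show that $\Theta$ is invertible, the Moore--Penrose pseudoinverse of this block-diagonal matrix is $\mathrm{diag}(0,\Theta^{-1})$. This follows by checking the four Penrose conditions, which hold blockwise; this gives \eqref{eq:cov_pinv}.

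Positive definiteness of $L_{\bm X}$ is the main substantive step. By Lemma~\ref{lem:fund}, $L_{\bm X}$ is the covariance matrix of the centered Gaussian vector $(\overline X_{t_1},\dots,\overline X_{t_J})$, where $\overline X_s=\sqrt{\eps}\int_0^s a_{s-u}\dd B_u$ (in dimension one). For $c\in\R^J$, the Itô isometry gives
\[
c^\top L_{\bm X}c=\eps\int_0^1\Bigl(\sum_k c_k a_{t_k-u}\bm 1_{u<t_k}\Bigr)^2\dd u .
\]
If this vanishes, the integrand $g(u)=\sum_k c_k a_{t_k-u}\bm 1_{u<t_k}$ is zero almost everywhere. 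Take $k^\star$ to be the largest index with $c_{k}\neq 0$. On the interval $(t_{k^\star-1},t_{k^\star})$ only that term survives, and $a_{t_{k^\star}-u}>0$ there, because $a_r>0$ for $r>0$ both when $\gamma>0$ and when $\gamma=0$ (where $a_r=r$). This is a contradiction, so $L_{\bm X}\succ0$. The points $t_1,\dots,t_J$ are distinct and positive, which is exactly what this argument needs.

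It remains to invert $\Theta=L\otimes I_d+\tilde{\bm a}\tilde{\bm a}^\top\otimes\Sigma_0$, a low-rank update of a Kronecker product. Write $U=\tilde{\bm a}\otimes I_d\in\R^{Jd\times d}$, so that $\tilde{\bm a}\tilde{\bm a}^\top\otimes\Sigma_0=U\Sigma_0U^\top$.

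Since $\Sigma_0$ may be singular, we verify the candidate inverse directly rather than apply Woodbury. Set
\[
M=\Sigma_0(I_d+c_\star\Sigma_0)^{-1}.
\]
The matrix $I+c_\star\Sigma_0$ is invertible because $c_\star>0$ and $\Sigma_0\succeq0$, and $\Sigma_0$ commutes with $(I+c_\star\Sigma_0)^{-1}$. Using $(A\otimes B)(C\otimes D)=AC\otimes BD$ and $\tilde{\bm a}^\top L^{-1}\tilde{\bm a}=c_\star$, we multiply $\Theta$ by the proposed $\Theta^{-1}$ and obtain
\[
I + (\tilde{\bm a}\tilde{\bm a}^\top L^{-1})\otimes\bigl(\Sigma_0 - M - c_\star\Sigma_0 M\bigr).
\]
The bracket equals $\Sigma_0-(I+c_\star\Sigma_0)M=0$, so the product is the identity and \eqref{eq:theta_t_inv} holds.

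When $\Sigma_0$ is invertible, $\Sigma_0(I+c_\star\Sigma_0)^{-1}=(\Sigma_0^{-1}+c_\star I)^{-1}$, which gives the final formula. The only delicate point in the argument is the positive-definiteness step; the rest is bookkeeping with Kronecker identities.
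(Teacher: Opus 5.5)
Your proof is correct and follows the paper's argument. It uses the same ingredients: $t_0=0$ kills the first block row and column of $\Sigma_{\bm X}$, and $L_{\bm X}\succ 0$ follows from the It\^o isometry plus the largest-nonzero-index contradiction on $(t_{k^\star-1},t_{k^\star})$. The only difference is that you check the candidate $\Theta^{-1}$ by direct multiplication rather than citing Sherman--Morrison--Woodbury. That has the small advantage of visibly covering a singular $\Sigma_0(x_0)$, which the textbook Woodbury form (requiring $\Sigma_0^{-1}$) does not, and it also shows $\Theta$ is invertible.
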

\begin{proof}
We first prove that $L_{\bm X}$ is positive definite. As before, work over $\R$ since the $d$-dimensional covariance tensorizes with $I_d$.

From the representation in \eqref{eq:lemma_eq}, the centered positional noise at time $t_i$ is
\begin{align*}
    \eta_i^X = \sqrt{\eps}\int_0^{t_i} a_{t_i-u}\dd B_u\,,
\end{align*}
where $a_r = \gamma^{-1}(1-\rme^{-\gamma r})$ for $\gamma>0$, with the convention $a_r=r$ when $\gamma=0$. Hence, for any $c=(c_1,\ldots,c_J)\in\R^J$,
\begin{align*}
        \langle c, L_{\bm X} c\rangle
    =
    \eps\int_0^{1}
    \left(
        \sum_{i=1}^J c_i a_{t_i-u}\bm 1_{\{u\leq t_i\}}
    \right)^2
    \dd u \geq 0\,.
\end{align*}
We proceed by contradiction: if $\langle c, L_{\bm X} c\rangle = 0$ for $c \neq 0$, then the integrand would be zero for almost every $u$. Let $m$ be the largest index such that $c_m\neq 0$; on the interval $(t_{m-1},t_m)$, all terms with $i<m$ vanish because $u>t_i$, and all terms with $i>m$ have zero coefficient by the definition of $m$. Thus the integrand reduces to $(c_m a_{t_m-u})^2$, which is strictly positive. This yields a contradiction, so $L_{\bm X}$ is positive definite and thus invertible.

The block form of $(\Sigma_{\bm X})^\dagger$ follows because $t_0=0$ makes the first row and column of $\Sigma_{\bm X}$ vanish, while the remaining block is $\Theta$. Since $L_{\bm X}$ is positive definite and $\Sigma_0(x_0)$ is positive semidefinite, $\Theta$ is invertible. The displayed expression for $\Theta^{-1}$ is then the Sherman--Morrison--Woodbury formula applied to $\Theta=L_{\bm X}\otimes I_d + \tilde{\bm a}\tilde{\bm a}^\top\otimes \Sigma_0(x_0)$.
\end{proof}

\subsection{Proof of Lemma~\ref{lem:bridge_process}}\label{sec:proof-bridge-process}
\begin{lemma}
For $\gamma \geq 0, \eps > 0$, consider  kinetic underdamped Brownian motion $\mathsf R = \mathsf R^{\gamma,\eps}$, and for $t_j, t_{j+1} \in [0,1]$ with $t_j < t_{j+1}$, fix endpoints $(x_j,v_j)$ and $(x_{j+1},v_{j+1})$ respectively. Then the bridge of the reference process conditioned on $(X_{t_j},V_{t_j}) = (x_j,v_j)$ and $(X_{t_{j+1}},V_{t_{j+1}}) = (x_{j+1},v_{j+1})$ is the unique solution to
\begin{align}\label{eq:bridge_sde_general}
    \begin{cases}
        \dd X_t = V_t\dd t \\
        \dd V_t = (\eps\nabla_v \log p_{t_{j+1}-t}(X_t,V_t,x_{j+1},v_{j+1}) - \gamma V_t)\dd t + \sqrt{\varepsilon} \dd B_t\,,
    \end{cases}
\end{align}
on $t \in [t_j,t_{j+1})$, where $p_{t_{j+1} - t}(x,v;x_{j+1},v_{j+1})$ is the transition density of the reference process.
\end{lemma}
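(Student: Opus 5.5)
The plan is to identify the bridge law via Doob's $h$-transform on phase space. Since the reference process \eqref{eq:ref_proc} is a linear SDE with nondegenerate Gaussian transition kernel, I will first note from Lemma~\ref{lem:fund} that, given $Z_s=(x,v)$, the law of $Z_t$ is Gaussian with mean $A_{t-s}(x,v)$ and covariance $\Gamma_Z(t-s,t-s)$. This covariance is positive definite for $t>s$, as in the argument of Lemma~\ref{lem:pseudo_inverse} (Kalman rank condition / hypoellipticity). Hence the transition density $p_{u}(x,v;x',v')$ exists for $u>0$ and is smooth and strictly positive in all variables. Define $h(t,x,v)\defeq p_{t_{j+1}-t}(x,v;x_{j+1},v_{j+1})$ for $t\in[t_j,t_{j+1})$. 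By the Chapman--Kolmogorov equation, or equivalently the backward Kolmogorov equation $\partial_t h + v\cdot\nabla_x h -\gamma v\cdot \nabla_v h + \frac{\eps}{2}\Delta_v h = 0$, which I will verify directly from the explicit Gaussian form, $h(t,Z_t)$ is a positive martingale under $\msf R$ on $[t_j,t_{j+1}-\delta]$ for every $\delta>0$.

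Next I will write the conditional law of the path on $[t_j,t_{j+1}-\delta]$, given $Z_{t_j}=(x_j,v_j)$ and $Z_{t_{j+1}}=(x_{j+1},v_{j+1})$. By the Markov property and Bayes' rule, it has density $h(t_{j+1}-\delta,Z_{t_{j+1}-\delta})/h(t_j,x_j,v_j)$ with respect to the law of $\msf R$ started at $(x_j,v_j)$. This is where a regular version of the conditional distribution is chosen, and the explicit densities make this choice canonical for every endpoint. I will apply It\^o's formula to $\log h(t,Z_t)$ and use the backward equation, which gives $\rmd h(t,Z_t)/h(t,Z_t) = \sqrt{\eps}\,\nabla_v\log h(t,Z_t)\cdot \rmd B_t$. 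Girsanov's theorem then shows that under the reweighted measure, $B_t-\int_{t_j}^t\sqrt{\eps}\,\nabla_v \log h\,\rmd s$ is a Brownian motion. Only the velocity equation is driven by noise, so the velocity acquires the extra drift $\eps\nabla_v\log p_{t_{j+1}-t}$, which yields \eqref{eq:bridge_sde_general} on each $[t_j,t_{j+1}-\delta]$. Novikov's condition is not needed, because the density process is a true martingale by construction. Letting $\delta\downarrow0$ and using consistency of these laws gives the statement on $[t_j,t_{j+1})$.

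For uniqueness, I will observe that on $[t_j,t_{j+1}-\delta]$ the drift $\nabla_v\log h$ is affine in $(x,v)$, since $h$ is Gaussian, with continuous bounded coefficients. The SDE is therefore linear with globally Lipschitz coefficients, so pathwise uniqueness, and hence uniqueness in law, holds on each such interval. This in turn determines the solution on $[t_j,t_{j+1})$. As a consistency check in the case $\gamma=0$, I will compute $\eps\nabla_v\log p$ from the covariance in Lemma~\ref{lem:fund}, namely $\eps\begin{pmatrix} u^3/3 & u^2/2\\ u^2/2 & u\end{pmatrix}$ with $u=t_{j+1}-t$, whose inverse is $\frac{12}{\eps u^4}\begin{pmatrix} u & -u^2/2\\ -u^2/2 & u^3/3\end{pmatrix}$. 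This should recover \eqref{eq:a_jt}.

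I expect the main obstacle to be rigor at the terminal time $t\to t_{j+1}$, where $h$ blows up like $u^{-2d}$ and the drift is singular. I plan to handle this entirely through the $\delta$-truncation together with the explicit Gaussian structure: the law on $[t_j,t_{j+1})$ is determined by its restrictions to each $[t_j,t_{j+1}-\delta]$. Continuity at $t_{j+1}$ and the fact that the endpoint is attained follow from the Gaussian bridge being a continuous Gaussian process.
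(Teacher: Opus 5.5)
Your proposal is correct, but it takes a different route from the paper's proof.

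\textbf{The paper's argument.} The paper works with the marginal flow, not the path. It writes the bridge marginal as $\mu_t^{\msf{br}} \propto p_{t-t_j}((x_j,v_j),\cdot)\,p_{t_{j+1}-t}(\cdot,(x_{j+1},v_{j+1}))$ and differentiates $\int \phi_t\,\mu_t^{\msf{br}}$ in time. It uses the forward Kolmogorov equation for the first factor and the backward one for the second. The product rule $\mathcal{L}_\gamma(gh) = (\mathcal{L}_\gamma g)h + g\,\mathcal{L}_\gamma h + \eps\langle\nabla_v g,\nabla_v h\rangle$ then shows that $\mu^{\msf{br}}$ weakly solves the Fokker--Planck equation for $\mathcal{L}_\gamma + \eps\langle\nabla_v\log p_{t_{j+1}-t},\nabla_v\,\cdot\,\rangle$. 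Finally, it appeals to both processes being Markov to conclude that generators, hence laws, coincide.

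\textbf{Your argument.} Your Girsanov version of the $h$-transform needs only the backward equation. It enters through the martingale property of $h(t,Z_t)$, which you rightly obtain as a true martingale from Chapman--Kolmogorov, so no Novikov-type condition is needed. You then identify the entire path law on each $[t_j,t_{j+1}-\delta]$ through an explicit Radon--Nikodym density.

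\textbf{What each buys.} Your route does not rely on the asserted Markov property of the bridge, nor on passing from a marginal-flow identity to equality of path laws. Your remark that the drift is affine with bounded coefficients away from $t_{j+1}$ also proves the uniqueness claimed in the statement, which the paper leaves implicit. Your $\gamma=0$ inversion of the covariance is correct and recovers \eqref{eq:a_jt}, which the paper only asserts after a ``tedious calculation''. The paper's computation, in exchange, is shorter, avoids stochastic calculus on path space, and is phrased entirely in terms of generators.
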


\begin{proof}
Since both the $\msf R$-bridge and the solution of \eqref{eq:bridge_sde_general} are Markov processes, it suffices to show that their infinitesimal generators coincide. If we denote by $p_t((x,v),(x',v'))$ the transition density of the reference process $\msf R$, then the marginal flow of the bridge at time $t \in [t_j, t_{j+1})$ is
\begin{equation*}
\mu^{\msf br}_t(x,v) \propto
{p_{t-t_j}((x_j,v_j),(x,v))p_{t_{j+1}-t}((x,v),(x_{j+1},v_{j+1}))}\,.
\end{equation*}
Our goal is to understand the evolution of $\mu^{\msf{br}}_t$ in the weak sense. To this end, consider now a smooth test function $\phi:[t_{j},t_{j+1}]\times \mathbb{R}^{2d}\to\mathbb{R}$ with compact support; we compute
\begin{align*}
    \partial_t \int \phi_t \mu_t^{\msf{br}} = \int (\partial_t \phi_t) \mu_t^{\msf{br}} + \phi_t (\partial_t p_{t-t_j})p_{t_{j+1}-t} + \phi_t p_{t-t_j}(\partial_t p_{t_{j+1}-t})\,.
\end{align*}
To continue, we require the infinitesimal generator of the kinetic (underdamped) reference process, which we recall is given by
\begin{align*}
    \cL_\gamma f(x,v) = \langle v, \nabla_x f(x,v)\rangle - \gamma \langle v, \nabla_v f(x,v)\rangle + \frac{\eps}{2}\Delta_{v}f(x,v)
\end{align*}
for a test function $f$. Now, we use the appropriate forward and backward Kolmogorov equations to see that
\begin{align*}
\partial_t p_{t-t_j} = \cL_\gamma^* p_{t-t_j}\,, \quad \text{and} \quad \partial_t p_{t_{j+1}-t} = -\cL_\gamma p_{t_{j+1}-t}
\end{align*}
resulting in
\begin{align}
\begin{split}\label{eq_h_trans_1}
    \partial_t \int \phi_t \mu_t^{\msf{br}} &= \int (\partial_t \phi_t) \mu_t^{\msf{br}} + \phi_t ( \cL_\gamma^* p_{t-t_j})p_{t_{j+1}-t}  - \phi_t p_{t-t_j}(\cL_\gamma p_{t_{j+1}-t}) \\
    &= \int (\partial_t \phi_t) \mu_t^{\msf{br}} + \cL_\gamma (\phi_t p_{t_{j+1}-t}) p_{t-t_j}  - \phi_t p_{t-t_j}(\cL_\gamma p_{t_{j+1}-t})\,,
\end{split}
\end{align}
where the last line uses integration by parts with the adjoint $\cL_\gamma$.
Next, we use the commutation identity for two test functions \citep{bakry2014analysis}
\begin{equation*}
\begin{aligned}
\mathcal{L}_\gamma(g h) &=(\mathcal{L}_\gamma g) h  + g(\mathcal{L}_\gamma h) + \varepsilon \langle \nabla_{v} g, \nabla_v h\rangle
\end{aligned}
\end{equation*}
to rewrite \eqref{eq_h_trans_1} as
\begin{align}\label{eq_h_trans_2}
\begin{split}
\partial_t \int \phi_t \mu_t^{\msf{br}} &=
\int \Bigl((\partial_t \phi_t)\mu_t^{\mathsf{br}} +  (\mathcal{L}_\gamma \phi_t) p_{t_{j+1}-t} p_{t-t_{j}} + \varepsilon\, \langle \nabla_v \phi_t, \nabla_v p_{t_{j+1}-t} \rangle p_{t - t_j}\Bigr) \\
 &= \int ( (\partial_t + \mathcal{L}_\gamma) \phi_t   + \varepsilon \, \langle \nabla_v \log p_{t_{j+1}-t}, \nabla_v \phi_t\rangle  )\mu_t^{\msf{br}} \,,
\end{split}
\end{align}
which implies that $\mu_t^{\msf{br}}$ is a weak solution to the Fokker--Planck equation associated with the generator
\begin{equation*}
\mathcal{L}^{\msf{br}}_{\gamma, t} := \cL_\gamma + \varepsilon  \langle \nabla_v \log p_{t_{j+1}-t}, \nabla_v \cdot \rangle\,.
\end{equation*}
\end{proof}

\begin{lemma}
Let $\mathsf R^{\gamma,\eps}$ be the kinetic underdamped Brownian motion reference process for $\gamma > 0$ and $\eps > 0$, and fix $T \in (0,1]$ and $z_T = (x_T,v_T) \in \R^{2d}$. Letting  $p_{T-t}((x,v);z_T)$ denote the transition density from time $t \in [0,T)$ to $T$ and $r \defeq T - t$, it holds that
\begin{align}\label{eq:damped_bridge_score}
    \eps \nabla_v \log p_{T-t}((x,v); z_T) =
    C_x(r;\gamma)(x_T - x - a_r v) + C_v(r;\gamma)(v_{T}- b_r v)\,,
\end{align}
where
\begin{align*}
    C_x(r;\gamma) &= \gamma^2(1-\rme^{-\gamma r})/D_r(\gamma)\,, \\
    C_v(r;\gamma) &= \gamma(\rme^{-2\gamma r} + 2 \gamma r \rme^{-\gamma r} - 1)/((1-\rme^{-\gamma r})D_r(\gamma))\,, \\
    D_r(\gamma) &= \gamma r(1 + \rme^{-\gamma r}) + 2\rme^{-\gamma r} - 2\,.
\end{align*}
In particular, $\gamma \to 0$ recovers \eqref{eq:a_jt}.
\end{lemma}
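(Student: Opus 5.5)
The approach is to compute the transition density of the damped kinetic process explicitly and differentiate its logarithm in $v$. By Lemma~\ref{lem:fund}, applied with initial time $t$ instead of $0$ (time-homogeneity), conditionally on $Z_t=(x,v)$ the endpoint $Z_T$ is Gaussian. Its mean is $A_r(x,v)^\top=(x+a_r v,\, b_r v)$ and its covariance is $\Gamma_r\otimes I_d$, with
\begin{align*}
\Gamma_r=\begin{pmatrix} \sigma_{xx} & \sigma_{xv}\\ \sigma_{xv} & \sigma_{vv}\end{pmatrix},\quad
\sigma_{vv}=\tfrac{\eps}{2\gamma}(1-\rme^{-2\gamma r}),\ \sigma_{xv}=\tfrac{\eps}{2}a_r^2,\ \sigma_{xx}=\tfrac{\eps}{\gamma^2}\bigl(r-2a_r+\tfrac{1}{2\gamma}(1-\rme^{-2\gamma r})\bigr).
\end{align*}
Since the problem tensorizes over coordinates, it suffices to take $d=1$. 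Writing the residual $\delta=(x_T-x-a_r v,\ v_T-b_r v)^\top$, we have $\log p_r((x,v);z_T)=-\tfrac12\delta^\top\Gamma_r^{-1}\delta+\text{const}$, where the constant does not depend on $(x,v)$. Since $\nabla_v\delta=-(a_r,b_r)^\top$, it follows that
\begin{align*}
\eps\nabla_v\log p_r=\eps\,(a_r,b_r)\,\Gamma_r^{-1}\delta .
\end{align*}
Hence the coefficients are $(C_x,C_v)=\eps(a_r,b_r)\Gamma_r^{-1}$, and the whole statement reduces to computing these two scalars.

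The next step is to write $\Gamma_r^{-1}=\det(\Gamma_r)^{-1}\begin{pmatrix}\sigma_{vv}&-\sigma_{xv}\\-\sigma_{xv}&\sigma_{xx}\end{pmatrix}$. This gives $C_x=\eps(a_r\sigma_{vv}-b_r\sigma_{xv})/\det\Gamma_r$ and $C_v=\eps(b_r\sigma_{xx}-a_r\sigma_{xv})/\det\Gamma_r$. Set $u=\rme^{-\gamma r}$, so that $a_r=(1-u)/\gamma$ and $b_r=u$, and all quantities become rational functions of $u$ and $\gamma r$. For the numerator of $C_x$,
\begin{align*}
a_r\sigma_{vv}-b_r\sigma_{xv}=\tfrac{\eps}{2\gamma^2}(1-u)\bigl[(1+u)-u(1-u)\bigr]=\tfrac{\eps}{2\gamma^2}(1-u)(1+u^2).
\end{align*}
I will check this against the claimed form: the target ratio $\gamma^2(1-u)/D_r$ has a simpler numerator, so some factor (possibly $(1-u)^2$) must cancel against $\det\Gamma_r$.

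The main obstacle is simplifying the determinant
\begin{align*}
\det\Gamma_r=\sigma_{xx}\sigma_{vv}-\sigma_{xv}^2=\tfrac{\eps^2}{4\gamma^4}\Bigl[(2\gamma r-4(1-u)+1-u^2)(1-u^2)-(1-u)^4\Bigr]
\end{align*}
into a product involving $D_r(\gamma)=\gamma r(1+u)+2u-2$. I will factor out $(1-u)$ and expand, aiming for an identity of the form $\det\Gamma_r\propto(1-u)\,D_r\cdot(\ldots)$. If the numerator of $C_x$ above does not line up with the stated formula after this cancellation, I will recheck the numerator first, since the sign conventions in Lemma~\ref{lem:fund} (which entries are $\mathrm{Cov}(X_s,V_t)$ versus $\mathrm{Cov}(V_s,X_t)$) are the likeliest source of slips. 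The same algebra then produces $C_v$, whose numerator should reduce to $\gamma(u^2+2\gamma r u-1)$ up to the factor $(1-u)$.

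Finally, for the limit $\gamma\to0$, I will Taylor expand in $\gamma$: $D_r=\gamma^3r^3/6+O(\gamma^4)$ and $1-u=\gamma r+O(\gamma^2)$. These give $C_x\to 6/r^2$, and $u^2+2\gamma ru-1=-\tfrac{2}{3}\gamma^3r^3+O(\gamma^4)$ gives $C_v\to-2/r$. Since $a_r\to r$ and $b_r\to1$, the drift becomes
\begin{align*}
\tfrac{6}{r^2}(x_T-x-rv)-\tfrac{2}{r}(v_T-v)=\tfrac{6(x_T-x)}{r^2}-\tfrac{2(v_T+2v)}{r},
\end{align*}
which is \eqref{eq:a_jt}. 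As a sanity check, I will confirm this limit directly with the $\gamma=0$ covariance $\Gamma_r=\eps\begin{pmatrix}r^3/3&r^2/2\\ r^2/2&r\end{pmatrix}$ from Lemma~\ref{lem:fund}.
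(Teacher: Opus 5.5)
Your route is the paper's: take the Gaussian transition kernel from Lemma~\ref{lem:fund} with covariance $\Gamma_r=\Gamma_Z(r,r)$, differentiate the quadratic log-density to get $(C_x,C_v)=\eps(a_r,b_r)\Gamma_r^{-1}$, invert the $2\times 2$ matrix, and Taylor-expand as $\gamma\to0$. Because $\Gamma_r$ is an equal-time covariance, it is symmetric, so the cross-covariance conventions you worry about play no role. Your covariance entries, that identification of $(C_x,C_v)$, and your bracket expression for $\det\Gamma_r$ are all correct. The paper itself only asserts the remaining ``tedious calculation''. However, the algebra you do display contains two genuine slips, and the identity that actually carries the lemma is stated as an aim rather than verified.

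Concretely, with $u=\rme^{-\gamma r}$ and $s=\gamma r$:

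(i) $a_r\sigma_{vv}=\frac{\eps}{2\gamma^2}(1-u)^2(1+u)$, because $a_r$ and $1-u^2$ each contribute a factor $1-u$. Hence $a_r\sigma_{vv}-b_r\sigma_{xv}=\frac{\eps}{2\gamma^2}(1-u)^2$, not $\frac{\eps}{2\gamma^2}(1-u)(1+u^2)$. With your value a factor $1+u^2$ would survive, and $C_x$ could not match.

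(ii) The determinant closes with no leftover factor. After extracting $1-u$, the bracket is $(2\gamma r-3+4u-u^2)(1+u)-(1-u)^3=2\gamma r(1+u)-4(1-u)=2D_r(\gamma)$, so $\det\Gamma_r=\frac{\eps^2}{2\gamma^4}(1-u)D_r(\gamma)$. Exactly one factor $1-u$ cancels, giving $C_x=\gamma^2(1-u)/D_r(\gamma)$. Likewise $b_r\sigma_{xx}-a_r\sigma_{xv}=\frac{\eps}{2\gamma^3}(u^2+2\gamma r u-1)$ exactly, so the $1-u$ in the denominator of $C_v$ comes from $\det\Gamma_r$, not from the numerator.

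(iii) In the limit, $u^2+2su-1=\rme^{-2s}+2s\rme^{-s}-1=-\tfrac13 s^3+O(s^4)$, not $-\tfrac23 s^3$. Your coefficient would give $C_v\to-4/r$. The correct one, together with $(1-u)D_r(\gamma)=s^4/6+O(s^5)$, gives the $-2/r$ you state.

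With these three corrections your argument is a complete proof. Your $\gamma=0$ cross-check using the covariance from Lemma~\ref{lem:fund} does return $(C_x,C_v)=(6/r^2,-2/r)$, consistent with \eqref{eq:a_jt}.
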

\begin{proof}
Writing $r = T-t$, it holds by Lemma~\ref{lem:fund} that, conditioning on $Z_t = (x,v)$
\begin{align*}
    (X_T,V_T) | (X_t,V_t) = (x,v) \sim {\gauss}\Bigl(\begin{pmatrix}
        x + a_rv \\
        b_r v
    \end{pmatrix},  Q_r\Bigr)\,,
\end{align*}
where $Q_r = \Gamma_Z(r,r)$ (and recall the notation $a_r = \gamma^{-1}(1-\rme^{-\gamma r})$ and $b_r = \rme^{-\gamma r}$ from Lemma~\ref{lem:fund}).
Thus up to a constant, we have that
\begin{align*}
&\log p_{T-t}((x,v),(x_T,v_T)) =\\
&\qquad -\frac{1}{2} \begin{pmatrix}
    x_T -
    (x + a_r v)\\
    v_T - b_r v
\end{pmatrix}^\top (Q_r \otimes I_d)^{-1}\begin{pmatrix}
    x_T -
    (x + a_r v)\\
    v_T - b_r v
\end{pmatrix}  + {\rm const}\,.
\end{align*}
As the matrix $Q_r^{-1}$ is explicit (via block 2-by-2 inversion), one can verify through a tedious calculation that indeed \eqref{eq:damped_bridge_score} holds, and setting $\gamma = 0$ recovers \eqref{eq:a_jt}.
\end{proof}
\subsection{Proof of Theorem~\ref{thm:main}}\label{proof_main}
Our proof follows the standard argument that passes through weak solutions of the kinetic underdamped Fokker--Planck equation. Letting $f : \R^d \times \R^d \to \R$ be a smooth test function, our goal is to compute
\begin{align*}
    \int f \,\partial_t \mu_t^I = \partial_t \Bigl(\int f \dd \mu_t^{I}\Bigr) = \partial_t \bbE[f(X_t^I,V_t^I)]\,,
\end{align*}
where $\mu_t^I$ is the density of $\mathsf P^I_t$. To proceed, we apply It\^o's lemma to first arrive at
\begin{align*}
        \dd f &= \langle \nabla_x f, \dd X_t^I \rangle + \langle \nabla_v f, \dd V_t^I\rangle + \frac{1}{2}\langle {\rm d} V_t^I, \nabla_{v}^2 f \dd V_t^I \rangle \\
        &= \bigl(\langle \nabla_x f, V_t^I \rangle + \langle \nabla_v f, a_t^j \rangle
        - \gamma \langle\nabla_v f, V_t^I \rangle +  \frac{\eps}{2}\Delta_v f\bigr)\dd t  + \sqrt\eps \langle \nabla_v f, \dd B_t\rangle\,.
    \end{align*}
Taking expectations, as the last term has mean zero by the martingale property, yields
\begin{align*}
    \partial_t \bbE[f(X_t^I,V_t^I)] &= \bbE\Bigl[ \langle \nabla_x f(X_t^I,V_t^I), V_t^I \rangle + \langle \nabla_v f(X_t^I,V_t^I), a_t^j(X_t^I,V_t^I) \rangle\\
    &\phantom{=}\quad - \gamma \langle\nabla_vf(X_t^I, V_t^I), V_t^I\rangle  + \frac{\eps}{2}\Delta_vf(X_t^I, V_t^I)\Bigr]\,,
\end{align*}
where recall that $a_t^j(X_t^I,V_t^I) = a_t^j(X_t^I, V_t^I; X_{t_{j+1}}^I, V_{t_{j+1}}^I)$. By the tower property of conditional expectations, it holds that
\begin{align*}
    &\bbE[\langle a_t^j(X_t^I, V_t^I; X_{t_{j+1}}^I, V_{t_{j+1}}^I), \nabla_v f(X_t^I,V_t^I) \rangle ] \\
    &\quad = \bbE[\bbE[\langle a_t^j(X_t^I, V_t^I; X_{t_{j+1}}^I, V_{t_{j+1}}^I), \nabla_v f(X_t^I,V_t^I) \rangle \,|\,(X_t^I,V_t^I)]] \\
    &\quad = \bbE[\langle \bbE[a_t^j(X_t^I, V_t^I; X_{t_{j+1}}^I, V_{t_{j+1}}^I)\,|\,(X_t^I,V_t^I)], \nabla_v f(X_t^I,V_t^I) \rangle ] \\
    &\quad = \bbE[\langle {a_t^{M}}(X_t^I,V_t^I), \nabla_v f(X_t^I,V_t^I) \rangle ]\,,
\end{align*}
where we recall that for an interval $[t_j,t_{j+1})$
\begin{align*}
    (t,x,v)\mapsto a_t^{M}(x,v) \defeq \bbE[a_t^j(X_t^I, V_t^I;X_{t_{j+1}}^I, V_{t_{j+1}}^I)\,|\,(X_t^I,V_t^I) = (x,v)]\,.
\end{align*}
Therefore, we see that the weak solution to $\mu_t^I$ satisfies
\begin{align*}
    \partial_t \bbE[f(X_t^I,V_t^I)] &= \bbE\Bigl[ \langle \nabla_x f(X_t^I,V_t^I), V_t^I \rangle + \langle \nabla_v f(X_t^I,V_t^I), a_t^{M}(X_t^I,V_t^I)\rangle \\
    &\phantom{=}\quad -\gamma \langle \nabla_v f(X_t^I, V_t^I), V_t^I \rangle + \frac{\eps}{2}\Delta_vf(X_t^I, V_t^I)\Bigr]\,.
\end{align*}
Since $\mu_0^M = \mu_0^I$ and, by assumption, the acceleration field implies a unique solution to the kinetic Fokker--Planck equation, we conclude the proof.

\section{Algorithmic subroutine details}
\subsection{Learning a conditional sampler}\label{sec:cond_sampling}
Recall that Theorem~\ref{thm:main} requires that $(X_0^M,V_0^M) \sim \msf P_0^I$, meaning that the initial position and velocity need to be appropriately sampled from the law of the interpolant at the initial time. Dropping the $M$ superscript, this (informally) amounts to sampling from
\begin{align*}
    p(x_0,v_0) = p(x_0) p(v_0\mid x_0)\,.
\end{align*}
We have sample access to $p(x_0)$ (this is the first marginal constraint), but we do not have access to the conditional law of $V_0$ given $X_0$; we remedy this through a simple variational approximation step.

Recall that we can efficiently sample interpolants
\begin{align*}
    v_{0:J}\mid X_{0:J} = x_{0:J}
\end{align*}
from the Gaussian conditioning formula \eqref{eq:mean_cov_cond}. We then extract the initial position and velocity which correspond to index $j=0$.
Repeating this procedure $n$ times returns a paired dataset $\{(x_0^{(i)}, v_0^{(i)})\}_{i=1}^n$. As a practical surrogate, we turn to maximum likelihood estimation with a Gaussian approximation. Letting $x\mapsto \phi(x)$ be a neural network that outputs a mean and diagonal covariance, i.e., $m_\phi(x)\in \R^d$ and $\Sigma_\phi(x) = {\rm diag}(s_\phi(x))$ for $s_\phi(x) \in \R^d_+$, we then optimize
\begin{align*}
    \max_\phi \sum_{i=1}^n \log q_\phi(v_0^{(i)}\mid x_0^{(i)})\,,
\end{align*}
or equivalently
\begin{align}
    \min_\phi \sum_{i=1}^n
    \frac{1}{2}\log\det \Sigma_\phi(x_0^{(i)})
    + \frac{1}{2}\langle v_0^{(i)} - m_\phi(x_0^{(i)}),
    \Sigma_{\phi}(x_0^{(i)})^{-1}\bigl(v_0^{(i)} - m_\phi(x_0^{(i)})\bigr)\rangle\,.
\end{align}

Table~\ref{tab:params-conditional-sampler} contains the hyperparameters used for training the neural network $\phi$, which has a simple MLP structure.

\subsection{Bridge sampling}\label{sec:bridge_sampling}
For a given $j \in \{0,1,\ldots,J-1\}$, define ${\cal I}_j = [t_{j},t_{j+1}]$ and $\delta_{j} = t_{j+1} - t_j$. We want to sample (a single) intermediate point (position and velocity) along the trajectory given two endpoint constraints
\begin{align*}
    (X_t,V_t) | (X_{t_{j+1}},V_{t_{j+1}}) = (x_{j+1},v_{j+1}), (X_{t_{j}},V_{t_{j}}) = (x_j,v_j)\,,
\end{align*}
where $t \in [t_j + c, t_{j+1} - c]$ for $0 < c \ll \delta_j/2$. Again, we can appeal to explicit formulae that arise in the study of Gaussian processes. For simplicity, we restrict our attention to the case $\gamma=0$.

Define
\begin{align*}
     Y_t = \begin{pmatrix}
        X_{t_j + t} \\
        V_{t_j + t}
    \end{pmatrix}\,,
\end{align*}
for $t \in [0,\delta_{j}]$, where the dynamics will again follow the reference process $\msf R$. Thus
\begin{align*}
     Y_t = m_t + \sqrt \varepsilon\, \xi_t = \begin{pmatrix}
        x_j + t v_j \\
        v_j
    \end{pmatrix} + \sqrt \varepsilon \begin{pmatrix}
        \int_0^t B_u \dd u \\
        B_t
    \end{pmatrix}\,.
\end{align*}
Here, the initialization pins down the left endpoint constraints and it remains to enforce the right endpoint. Since $(Y_t; Y_{\delta_{j}})$ is jointly Gaussian, we can readily compute ${\rm Cov}(\xi_t)$, ${\rm Cov}(\xi_{\delta_j})$ and ${\rm Cov}(\xi_t,\xi_{\delta_j})$ explicitly. By Gaussian conditioning, we have that $Y_t \, | \, Y_{\delta_j} \sim {\gauss}(m_{t|j}, C_{t|j})$ with
\begin{align*}
      m_{t|j} &= m_t + {\rm Cov}(\xi_t,\xi_{\delta_j}){\rm Cov}(\xi_{\delta_j})^{-1}(Y_{\delta_j} - m_{\delta_j})\,, \\
      C_{t|j} &= \varepsilon {\rm Cov}(\xi_t) - \varepsilon {\rm Cov}(\xi_t,\xi_{\delta_j}){\rm Cov}(\xi_{\delta_j})^{-1}{\rm Cov}(\xi_t,\xi_{\delta_j})^\top\,.
\end{align*}
To conclude, we simply substitute $Y_{\delta_j} = (x_{j+1},v_{j+1})$ in the above expression, and draw a sample. Note that this procedure can be efficiently parallelized.

\section{Remaining experimental details and results}
\subsection{Gulf of Mexico}\label{app:gom}
The Gulf of Mexico (GoM) dataset describes the movement of water particles in ocean currents. In our experiments, we work with preprocessed data from \citet{shen2024multi}. They used data from a HYbrid Coordinate Ocean Model (HYCOM) reanalysis \citep{hycom2024} at a specific time snapshot to compute the velocity field of the ocean currents. Then they generated $999$ trajectories in $\R^2$ and randomly sampled each of them at one of the nine time points. With this procedure, each particle is observed only at a single time point and each generated marginal contains $n=111$ samples. The dataset is divided into $5$ training marginals (the even times $t_0, t_2, t_4, t_6, t_8$) with the remaining $4$ times $(t_1, t_3, t_5, t_7)$ used for held-out evaluation. The data are centered and scaled to have zero mean and unit variance.

In Tables~\ref{tab:gom-training-marginals} and~\ref{tab:gom-eval-marginals}, we present the $2$-Wasserstein distance on training and held-out marginals, to have a more precise indicator of the performance of the algorithms relative to Table~\ref{tab:low-dimensional-w2-300-official}. As for the other datasets, precise training information can be found in Table~\ref{tab:params-acceleration}.

\begin{table}[t]
\centering
\caption{Per-timepoint $W_2$ at training marginals on the Gulf of Mexico dataset.}
\label{tab:gom-training-marginals}
\begin{tabular}{lcccc}
Method & $t_2$ & $t_4$ & $t_6$ & $t_8$ \\
\midrule
MMFM & \wentry{0.050}{0.008} & \wentry{0.089}{0.017} & \wentry{0.140}{0.028} & \wentry{0.240}{0.042} \\
MMFM$_{\mathrm{const}}$ & \wentry{0.071}{0.018} & \wentry{0.126}{0.040} & \wentry{0.167}{0.026} & \wentry{0.262}{0.028} \\
MMFM$_{\mathrm{van}}$ & \wentry{0.070}{0.025} & \wentry{0.127}{0.044} & \wentry{0.165}{0.083} & \wentry{0.294}{0.116} \\
\midrule
AM (Ours) & \wentry{0.094}{0.011} & \wentry{0.108}{0.046} & \wentry{0.125}{0.038} & \wentry{0.140}{0.036} \\
\end{tabular}
\end{table}

\begin{table}[t]
\centering
\caption{Per-timepoint $W_2$ at held-out marginals on the Gulf of Mexico dataset.}
\label{tab:gom-eval-marginals}
\begin{tabular}{lcccc}
Method & $t_1$ & $t_3$ & $t_5$ & $t_7$ \\
\midrule
3MSBM$^*$ & 0.20 & 0.18 & 0.07 & 0.09 \\
MMFM & \wentry{0.332}{0.010} & \wentry{0.187}{0.037} & \wentry{0.132}{0.030} & \wentry{0.187}{0.051} \\
\midrule
MMFM$_{\mathrm{const}}$ & \wentry{0.198}{0.021} & \wentry{0.192}{0.027} & \wentry{0.164}{0.050} & \wentry{0.210}{0.030} \\
MMFM$_{\mathrm{van}}$ & \wentry{0.285}{0.034} & \wentry{0.292}{0.125} & \wentry{0.295}{0.135} & \wentry{0.319}{0.172} \\
\midrule
AM (Ours) & \wentry{0.194}{0.009} & \wentry{0.182}{0.023} & \wentry{0.145}{0.067} & \wentry{0.130}{0.031} \\
\end{tabular}
\end{table}

\subsection{Lotka--Volterra}\label{app:lv}
This dataset, which simulates the Lotka--Volterra equations of a predator-prey model \citep{goel1971volterra}, is obtained from \citet{shen2024multi}. The data live in $\R^2$ and consist of $9$ marginals with $50$ points each. As with the GoM dataset, the even times are used for training ($t_0, t_2, t_4, t_6, t_8$) and the remaining $4$ for evaluation ($t_1, t_3, t_5, t_7$). Again, the data are centered and normalized. Tables~\ref{tab:lv-training-marginals} and~\ref{tab:lv-eval-marginals} respectively contain the training and evaluation errors in $W_2$ across the times of interest.

\begin{table}[t]
\centering
\caption{Per-timepoint $W_2$ at training marginals on the Lotka--Volterra dataset.}
\label{tab:lv-training-marginals}
\begin{tabular}{lcccc}
Method & $t_2$ & $t_4$ & $t_6$ & $t_8$ \\
\midrule
MMFM & \wentry{0.179}{0.031} & \wentry{0.223}{0.039} & \wentry{0.448}{0.051} & \wentry{1.150}{0.216} \\
MMFM$_{\mathrm{const}}$ & \wentry{0.168}{0.024} & \wentry{0.208}{0.018} & \wentry{0.464}{0.069} & \wentry{1.274}{0.296} \\
MMFM$_{\mathrm{van}}$ & \wentry{0.301}{0.082} & \wentry{0.509}{0.265} & \wentry{0.991}{0.713} & \wentry{2.244}{1.346} \\
AM (Ours) & \wentry{0.140}{0.042} & \wentry{0.193}{0.036} & \wentry{0.340}{0.144} & \wentry{0.559}{0.096} \\
\end{tabular}
\end{table}

\begin{table}[t]
\centering
\caption{Per-timepoint $W_2$ at held-out marginals on the Lotka--Volterra dataset.}
\label{tab:lv-eval-marginals}
\begin{tabular}{lcccc}
Method & $t_1$ & $t_3$ & $t_5$ & $t_7$ \\
\midrule
3MSBM$^*$ & 0.24 & 0.18 & 0.07 & 0.36 \\
\midrule
MMFM & \wentry{0.206}{0.039} & \wentry{0.211}{0.060} & \wentry{0.351}{0.017} & \wentry{0.941}{0.109} \\
MMFM$_{\mathrm{const}}$ & \wentry{0.534}{0.029} & \wentry{0.237}{0.034} & \wentry{0.364}{0.028} & \wentry{0.983}{0.134} \\
MMFM$_{\mathrm{van}}$ & \wentry{0.533}{0.266} & \wentry{0.861}{0.488} & \wentry{1.145}{0.697} & \wentry{1.927}{1.201} \\
\midrule
AM (Ours) & \wentry{0.520}{0.020} & \wentry{0.236}{0.074} & \wentry{0.238}{0.108} & \wentry{0.510}{0.115} \\
\end{tabular}
\end{table}

\subsection{Embryoid body}\label{app:eb}
We use the preprocessed data from \cite{moon2019eb}, which are formed from the 100-dimensional PCA representation of the scRNA-seq of human embryoid cells collected across multiple time points. The dataset contains five marginals with 2,381, 4,163, 3,278, 3,665, and 3,332 cells (i.e., unpaired data) per marginal, respectively.

For our experiments, we use a lower-dimensional representation that considers only the first $5$ principal components (EB-5). Following the protocol from \cite{tong2020trajectorynet}, we perform a leave-one-out (LOO) procedure by training on four out of five marginals, normalizing the data to zero mean and unit variance. The evaluation is then performed at the held-out time (one of $t_1$, $t_2$, $t_3$) using the $1$-Wasserstein distance on normalized data. During training, we consider the full marginals and sample minibatches from them. At inference, we consider $2000$ points from the first marginal as starting points for the trajectories. Figure~\ref{fig:eb5-t1-pca} shows PCA projections of the ground truth marginals alongside those generated by each method. Table~\ref{tab:eb5-holdout} reports the ${W}_1$ distances on the held-out marginals under the leave-one-out protocol.

\begin{table}[t]
\centering
\caption{Mean and standard deviation of the $W_1$ error at held-out marginals on the EB-5 dataset.}
\label{tab:eb5-holdout}
\begin{tabular}{@{}lccc@{}}
Method & $t_1$ & $t_2$ & $t_3$ \\
\midrule
MMFM$_{\mathrm{const}}$ & \wentry{0.717}{0.026} & \wentry{0.862}{0.028} & \wentry{0.891}{0.068} \\
MMFM$_{\mathrm{van}}$   & \wentry{0.899}{0.055} & \wentry{0.890}{0.080} & \wentry{1.124}{0.128} \\
\midrule
AM & \wentry{0.763}{0.031} & \wentry{0.778}{0.015} & \wentry{0.831}{0.112} \\
\end{tabular}
\end{table}

\subsection{CITE}\label{app:cite}
This dataset \citep{burkhardtcite} consists of single-cell CITE-seq measurements collected at four time points over ten days. We considered both the $5$-dimensional (CITE5) and $50$-dimensional (CITE50) PCA projections, preprocessed following prior work by \cite{wang2026joint}. The training data are normalized to zero mean and divided by the maximum per-feature standard deviation, following \cite{neklyudov2024computational}. We evaluate all methods (considering the normalized space for CITE5 and the original PCA space for CITE50) under the leave-one-out (LOO) protocol as presented in \cite{kapusniak2024metric}, reporting $W_1$ on held-out marginals in Tables~\ref{tab:cite5-holdout} and~\ref{tab:cite50-holdout}. Figure~\ref{fig:cite5-t1-pca} shows the first PCA components of the ground truth and generated marginals using \texttt{AM} and MMFM.\looseness-1

\begin{figure}[h]
    \centering
    \includegraphics[width=0.8\linewidth]{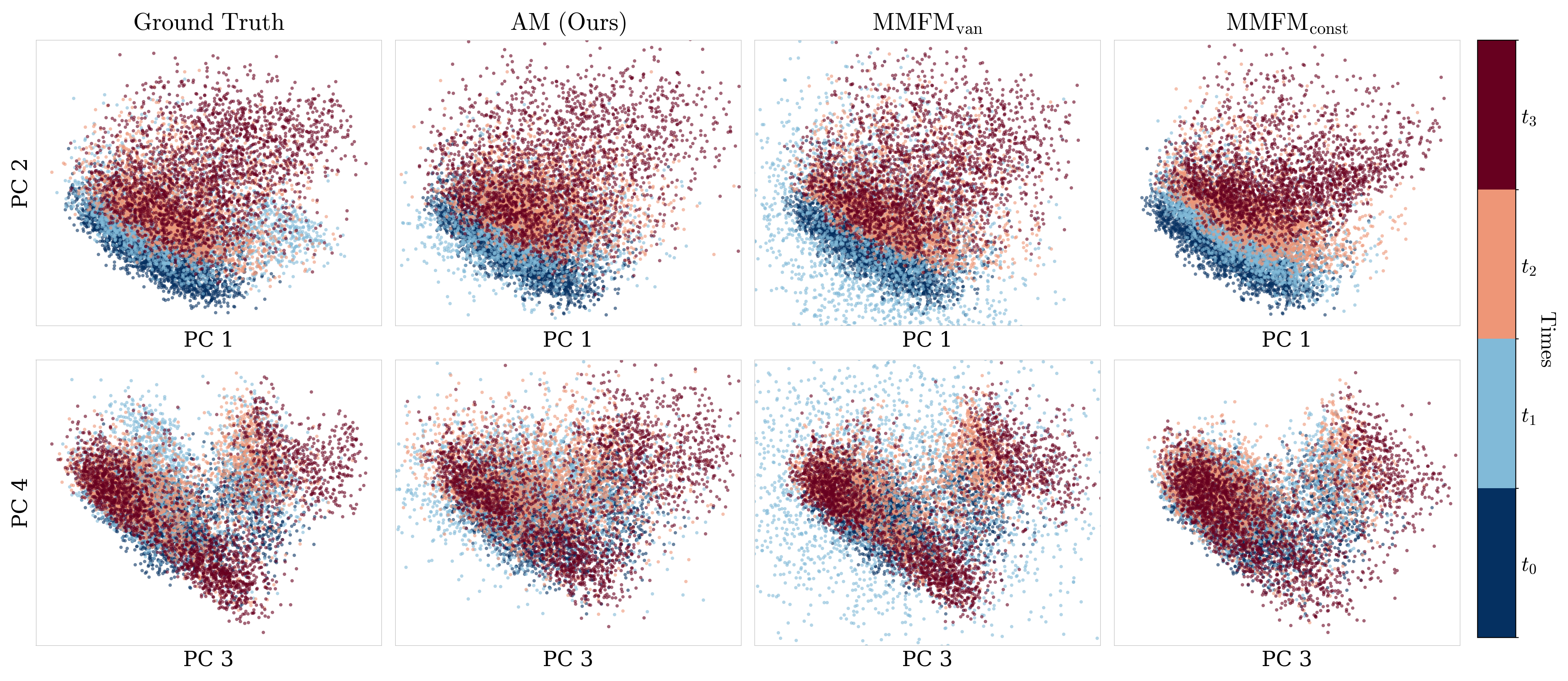}
    \caption{Visualization of marginals generated by \texttt{AM}, MMFM$_{\rm van}$, and MMFM$_{\rm const}$ for the first PCA components on the CITE5 dataset in the LOO setting ($t_1$ is left out).}
    \label{fig:cite5-t1-pca}
\end{figure}

\begin{table}[t]
\centering
\caption{Per-timepoint $W_1$ at held-out marginals on the CITE5 dataset (LOO protocol).}
\label{tab:cite5-holdout}
\begin{tabular}{lcc}
Method & $t_1$ & $t_2$ \\
\midrule
MMFM$_{\mathrm{const}}$ & $0.530 \pm 0.003$ & $0.553 \pm 0.018$ \\
MMFM$_{\mathrm{van}}$   & $1.008 \pm 0.035$ & $1.140 \pm 0.044$ \\
AM & $0.633 \pm 0.011$ & $0.635 \pm 0.028$ \\
\end{tabular}
\end{table}

\begin{table}[t]
\centering
\caption{Per-timepoint $W_1$ at held-out marginals on the CITE50 dataset (LOO protocol).}
\label{tab:cite50-holdout}
\begin{tabular}{lcc}
Method & $t_1$ & $t_2$ \\
\midrule
MMFM$_{\mathrm{const}}$ & $41.665 \pm 0.228$ & $41.684 \pm 0.515$ \\
MMFM$_{\mathrm{van}}$   & $183.490 \pm 9.200$ & $218.402 \pm 10.238$ \\
AM & $48.093 \pm 0.318$ & $51.568 \pm 0.637$ \\
\end{tabular}
\end{table}

\clearpage
\newpage
\subsection{Further training details and hyperparameters}\label{app:training_details}
This section presents the parameters used for training the models, to allow for experimental reproducibility.

All models are trained with the Adam optimizer \citep{kingma2015adam} with default momentum parameters ($\beta_1 = 0.9$, $\beta_2 = 0.999$). To ensure a fair comparison, AM, MMFM$_{\rm van}$ and MMFM$_{\rm const}$ share the same network architecture and training hyperparameters, with the only difference being the input dimension. We used 100 function evaluations for the low-dimensional experiments (note the official MMFM repository used 1001 evaluations). For EB5, CITE5, and CITE50, we used 2000 function evaluations. The low-dimensional datasets were implemented on an M2 MacBook Air, the EB5, CITE5 and CITE50 experiments were run on a single NVIDIA H200 GPU.

\begin{table*}[h]
\centering
\small
\setlength{\tabcolsep}{5pt}
\caption{Main hyperparameters for training the conditional Gaussian sampler.}
\label{tab:params-conditional-sampler}
\resizebox{0.65\textwidth}{!}{
\begin{tabular}{lccccc}
\toprule
 & \textbf{GoM} & \textbf{LV} & \textbf{EB5} & \textbf{CITE5} & \textbf{CITE50}\\
 \midrule
Hidden dim      & 64        & 64     & 1024   & 512    & 1024   \\
Layers          & 1            & 1      & 4      & 2      & 2      \\
Batch size          & 111          & 50      & 256      & 256      & 256      \\
LR ($\times 10^{-1}$) & $1$ & $1$ & $0.1$ & $0.1$ & $0.1$ \\
Train steps     & 300  & 300 & 20\,000 & 10\,000 & 10\,000 \\
\end{tabular}
}
\end{table*}

\begin{table*}[h]
\centering
\small
\setlength{\tabcolsep}{5pt}
\caption{Collection of hyperparameters per dataset.}
\label{tab:params-acceleration}
\resizebox{0.65\textwidth}{!}{
\begin{tabular}{lccccc}
\toprule
 & \textbf{GoM} & \textbf{LV} & \textbf{EB5} & \textbf{CITE5} & \textbf{CITE50}\\
\midrule
\multicolumn{6}{l}{\textit{Reference process}} \\[2pt]

$\sigma^2_v$   & 50   & 50  & 0.005 & 0.01  & 1     \\
$\sqrt{\varepsilon}$  & 4     & 2   & 0.2   & 0.01 & 0.01  \\
\midrule
\multicolumn{6}{l}{\textit{Network and training parameters}} \\[2pt]
Architecture    & MLP    & MLP     & MLP & MLP & MLP    \\
Hidden dim      & 256    & 256     & 256    & 256    & 1024    \\
Layers          & 2      & 2       & 5     & 5   & 4      \\
Batch size      & 111    & 50      & 256     & 256      & 256      \\
LR $(\times 10^{-2})$    & 1 & 1  & 1    & 0.1 & 0.1 \\
Train steps     & 300   &300       & 20\,000 & 20\,000 & 20\,000 \\
\end{tabular}}
\end{table*}

\end{document}